\newcommand\scalemath[2]{\scalebox{#1}{\mbox{\ensuremath{\displaystyle #2}}}}
\newtheorem{theorem}{Theorem}
\title{\LARGE \bf
FEJ-VIRO: A Consistent First-Estimate Jacobian Visual-Inertial-Ranging Odometry
}
\author{Shenhan Jia, Yanmei Jiao, Zhuqing Zhang, Rong Xiong and Yue Wang 
\thanks{All authors are with the State Key Laboratory of Industrial Control and Technology, Zhejiang University, Hangzhou, P.R. China. Yue Wang is the corresponding author wangyue@iipc.zju.edu.cn.}
}
\begin{document}

\maketitle
\thispagestyle{empty}
\pagestyle{empty}

\begin{abstract}

In recent years, Visual-Inertial Odometry (VIO) has achieved many significant progresses. However, VIO methods suffer from localization drift over long trajectories. In this paper, we propose a First-Estimates Jacobian Visual-Inertial-Ranging Odometry (FEJ-VIRO) to reduce the localization drifts of VIO by incorporating ultra-wideband (UWB) ranging measurements into the VIO framework \textit{consistently}. Considering that the initial positions of UWB anchors are usually unavailable, we propose a long-short window structure to initialize the UWB anchors' positions as well as the covariance for state augmentation. After initialization, the FEJ-VIRO estimates the UWB anchors' positions simultaneously along with the robot poses. We further analyze the observability of the visual-inertial-ranging estimators and proved that there are \textit{four} unobservable directions in the ideal case, while one of them vanishes in the actual case due to the gain of spurious information. Based on these analyses, we leverage the FEJ technique to enforce the unobservable directions, hence reducing inconsistency of the estimator. Finally, we validate our analysis and evaluate the proposed FEJ-VIRO with both simulation and real-world experiments.

\end{abstract}

\section{INTRODUCTION}
State estimation is a fundamental capacity to enable the operation of autonomous robot systems with applications such as autonomous driving, unmanned aerial vehicles and so on. Robust and reliable state estimation is essential in real world applications as large errors in state estimation can lead to the destruction of robots. In recent years, visual inertial odometry (VIO) is attracting increasing attention because of its lightweight, accuracy, and reliability, which has been successfully applied to many real-time robotic systems\cite{ORB-SLAM3}\cite{qin2017vins}\cite{OpenVINS}.

There is significant estimation drift during a long period of time in onboard self-localization mode of VIO system. Therefore, loop-closure and bundle adjustment (BA) techniques are required to correct drift, but are often accompanied by drastic changes in estimates that introduce destabilization to the robot\cite{viral_fusion}. So in live operation of robotic systems, these techniques are usually disabled to ensure stable operation. To achieve accurate onboard self-localization, an alternative solution is to fuse the external localization information from GPS \cite{GPS_Huang} motion capture (mocap)\cite{mocap_cite}, or artificial visual markers (such as AprilTags)\cite{tags_vio}. However, GPS can only work in open spaces and mocap requires expensive and complex setup in a fixed indoor environment, neither can be rolled out to dense urban areas. The use of artificial visual markers is inflexible and it's difficult to arrange the markers to cover a large area. Therefore, incorporating Ultra Wideband (UWB) measurements is a promising solution which can be deployed in both indoor and outdoor environments at low cost. In this paper, we focus on the situation that the positions of UWB anchors are unknown and require to be estimated simultaneously during operation.

For the fusion of UWB, there are two lines of methods. The first line is the optimization-based methods which fuse the UWB data by constructing cost function to optimize\cite{viral_slam}\cite{viral_fusion}. However, such methods are computationally intensive. Due to the limited on-board computing resources on robotic systems, the lightweight fusion methods are required. Another lightweight line is the filter-based methods which use extended Kalman filter (EKF) for state estimation\cite{uwb_rough_init}. The key problem in filtering-based methods is the state modeling of UWB measurements, including the design of state vector and the evaluation of its Jacobian matrices. After conducting experiments using existing filtering-based fusion methods, we find the accuracy decrease without reasonably dealing with the uncertainty. Therefore, an explicit theory is required to guide the state modeling of fusing UWB measurements in filtering-based approaches.

\begin{figure}
  \centering 
    \includegraphics[width=1.0\columnwidth,trim=205 185 245 85,clip]{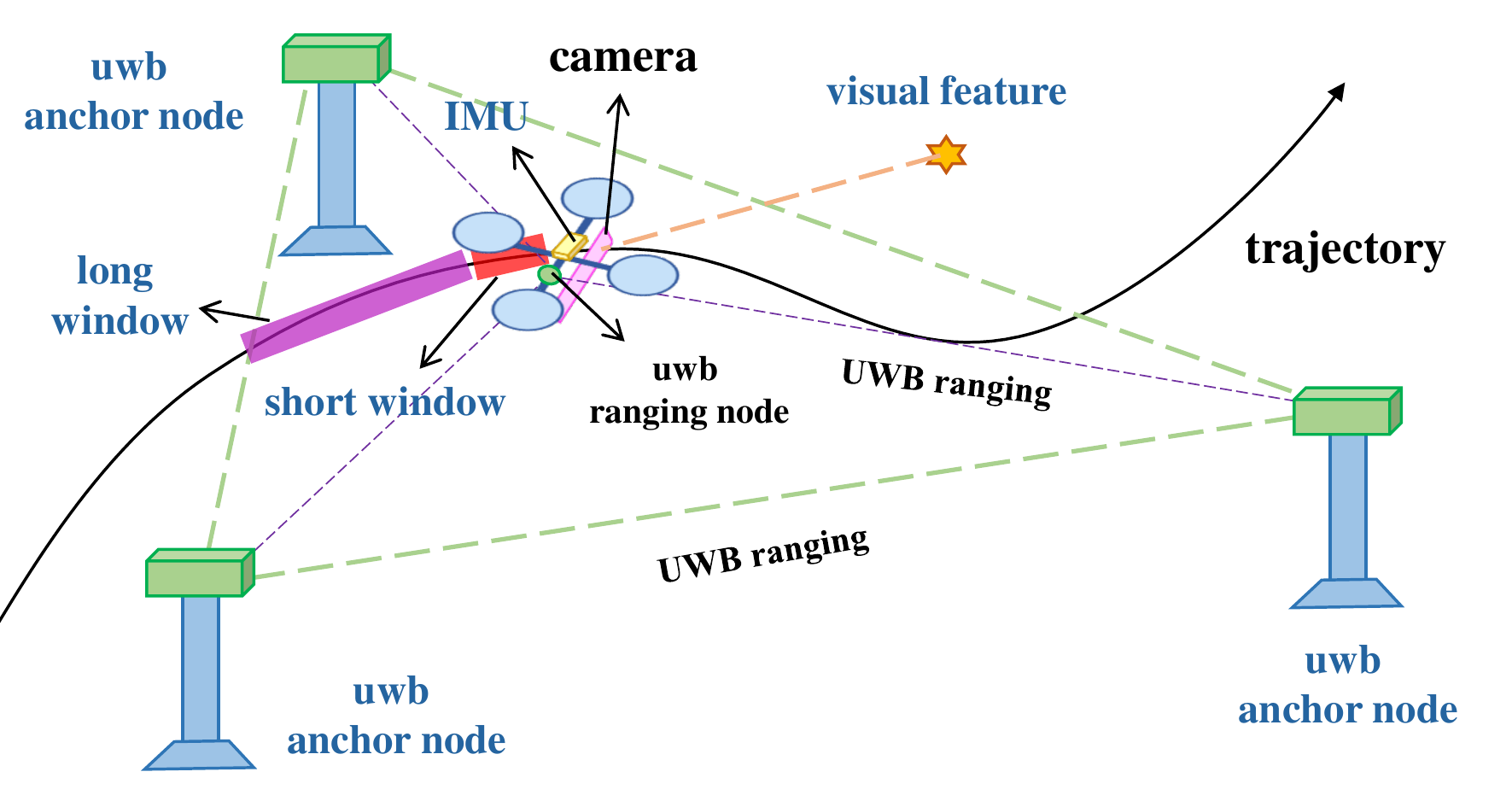}
  \captionsetup{font={footnotesize }}
  \caption{An intuitive illustration of our proposed FEJ-VIRO system. The proposed FEJ-VIRO is a consistent filter which estimates the robot pose and UWB anchors' positions simultaneously by fusing visual-inertial measurements as well as ranging measurements from UWB anchors. Dash lines represent sensor measurements, and state values are written with blue font.} 
  \vspace{-6mm}
  \label{fig:first-fig}
\end{figure}

In this paper, we follow the lightweight line of filter-based methods to first propose a multi-state constraint Kalman filter (MSCKF) \cite{MSCKF} based fusion solution to VIRO (as shown in Fig.\ref{fig:first-fig}), which is less computationally intensive than EKF based methods and satisfies the on-board computing requirements of robotic systems. In addition, we propose two techniques to solve the state modeling of UWB measurements in theory. One is for robust initialization of the UWB, the other is for consistent fusion of the UWB measurements into VIRO. For the initialization problem of UWB, the long-short sliding window strategy is proposed which solves the problem of initialization which allows to initialize the UWB anchors' positions with sufficient measurements as well as to evaluate the covariance for state augmentation. For the consistent fusion of UWB measurements, we first analyze the observability of the VIRO system with unknown UWB anchors' position and prove that there are \textit{four} unobservable directions, which correspond to the global translation and the global rotation about the gravity, despite the fusion of ranging sensors. Although the VIRO system can not achieve drift-free estimation, we can improve estimate accuracy by fusing the UWB ranging measurements. However, we prove that fusing the UWB measurements by direct EKF update operation causes inconsistency. Therefore, we propose a \textit{FEJ-VIRO} framework which linearizes the ranging measurement models at the first-estimate of state, which is known as the FEJ technique, to address the inconsistency issue. Experiments show that the reasonable initialization and consistent filter designing bring obvious improvements to estimation accuracy to \textit{FEJ-VIRO}. In summary, the contributions of this paper are listed as follows:

\begin{itemize}
\item A lightweight MSCKF-based Visual-inertial-ranging odometry (VIRO) method is proposed and a long-short sliding window strategy is presented for initializing UWB anchors and state augmentation.
\item The observability analysis of visual-inertial-ranging estimator is derived which points out \textit{four} unobservable directions of VIRO existing in the ideal case.
\item A FEJ-VIRO framework is proposed which extends the FEJ technique to maintain consistency, leading to obvious improvement in estimation accuracy.
\item Experiments on multiple simulation and real-world datasets validate the derived observability analysis and the effectiveness of the proposed FEJ-VIRO method.
\end{itemize}

\section{Related Work} \label{sec:related_work}
There are extensive works on visual-inertial odometry (VIO) \cite{OpenVINS}\cite{MSCKF}\cite{SVO2} and visual-inertial SLAM (VI-SLAM) \cite{ORB-SLAM3}\cite{qin2017vins}. The main difference between VIO and VI-SLAM is whether to use global information: VI-SLAM gains better accuracy by performing mapping (and thus loop closure), while the localization errors of VIO grows unbounded due to the lack of global information\cite{huang_review} \cite{jiao20202} \cite{chen2020deep}. However, it is well known that the computational complexity of VI-SLAM is significantly higher than VIO due to the iterative non-linear optimization, which makes it unsuitable to apply VI-SLAM to on-board devices\cite{MSCKF}\cite{OpenVINS}. As a result, there are two lines of methods to achieve highly accurate estimation with bounded computational complexity: by reducing the complexity of VI-SLAM \cite{keyframe_slam}\cite{isam}\cite{isam2} and by incorporating global information into VIO framework\cite{GPS_Huang}\cite{viral_fusion}. In this paper, we focus on the multi-sensor fusion methods to reduce localization drifts of VIO by incorporating ranging measurements from multiple unknown UWB anchors.

Over the last decades, many researchers have investigated in the use of UWB measurements for localization \cite{uwb_rough_init}\cite{viral_fusion}\cite{viral_slam}\cite{nguyen2021ntuviral}. Some early works\cite{kown_uwb1}\cite{known_uwb2}\cite{known_uwb3} assume prior knowledge of anchor positions and achieve drift-free estimation of the robots. However, these methods in practice require an offline calibration of the initial position of robots, which is difficult in dynamic and large-scale environments. In recent years, researchers propose to estimate state and UWB anchor positions simultaneously. \cite{uwb_rough_init} propose a EKF-based framework to initialize anchor positions and estimate them jointly with the robot position. However, the initialization accuracy of this method is not sufficiently high and the algorithm fail easily due to the ill-conditioned of matrix. \cite{single_uwb} propose to generate an initial guess of the UWB anchor position with a variant of the Levenberg-Marquardt method and simultaneously estimate the position with the robot state. \cite{viral_fusion} propose an estimator to fuse visual-inertial-ranging-lidar measurements where the UWB anchor positions are also assumed unknown and need to be estimated online, based on which \cite{viral_slam} further propose mapping and loop-closure with these measurements. Note that most recent methods \cite{single_uwb}\cite{viral_fusion}\cite{viral_slam} are based on non-linear optimization and relies on global bundle adjustment to achieve high accuracy, which is computational intensive and hard to run on-board in real-time. In this paper, we propose a filter-based visual-inertial-ranging system which incorporate multi-sensor measurements in an efficient MSCKF framework. We also assume unknown UWB anchor position and propose a method to initialize UWB anchor positions as well as its initial variance and covariance with the state.

The observability and consistency of VIO has been extensively studied. \cite{Consistency_ana} \cite{li_high} proved that visual-inertial estimator has four unobservable directions which correspond to the global translation and the global rotation about the gravity. They further study the inconsistency of EKF-based VIO and propose a first-estimated Jacobian technique to address the inconsistency issue. \cite{Consistency_ana} proved the key cause of inconsistency is the gain of spurious information along unobservable directions, which results in the over-confidence of the estimator. Extensive works studied this problem and proposed many techniques to address it \cite{zhang2022toward}. We refer the readers to \cite{past_review} for detailed review. In this paper, we prove the general VIRO system also has four unobservable directions despite of the fusion UWB measurements. Although we can not achieve drift-free estimation with VIRO, we can reduce the localization drifts of VIO by incorporating ranging measurements. However, from the unobservable directions of the VIRO, we find that fusing the ranging measurements with standard EKF-update results in inconsistency, which is caused by the estimator's over-confidence (as \cite{Consistency_ana}). Based on these analyses, we propose leveraging FEJ technique to address the inconsistency issue.

\begin{figure}
  \centering
  \includegraphics[width=0.8\columnwidth]{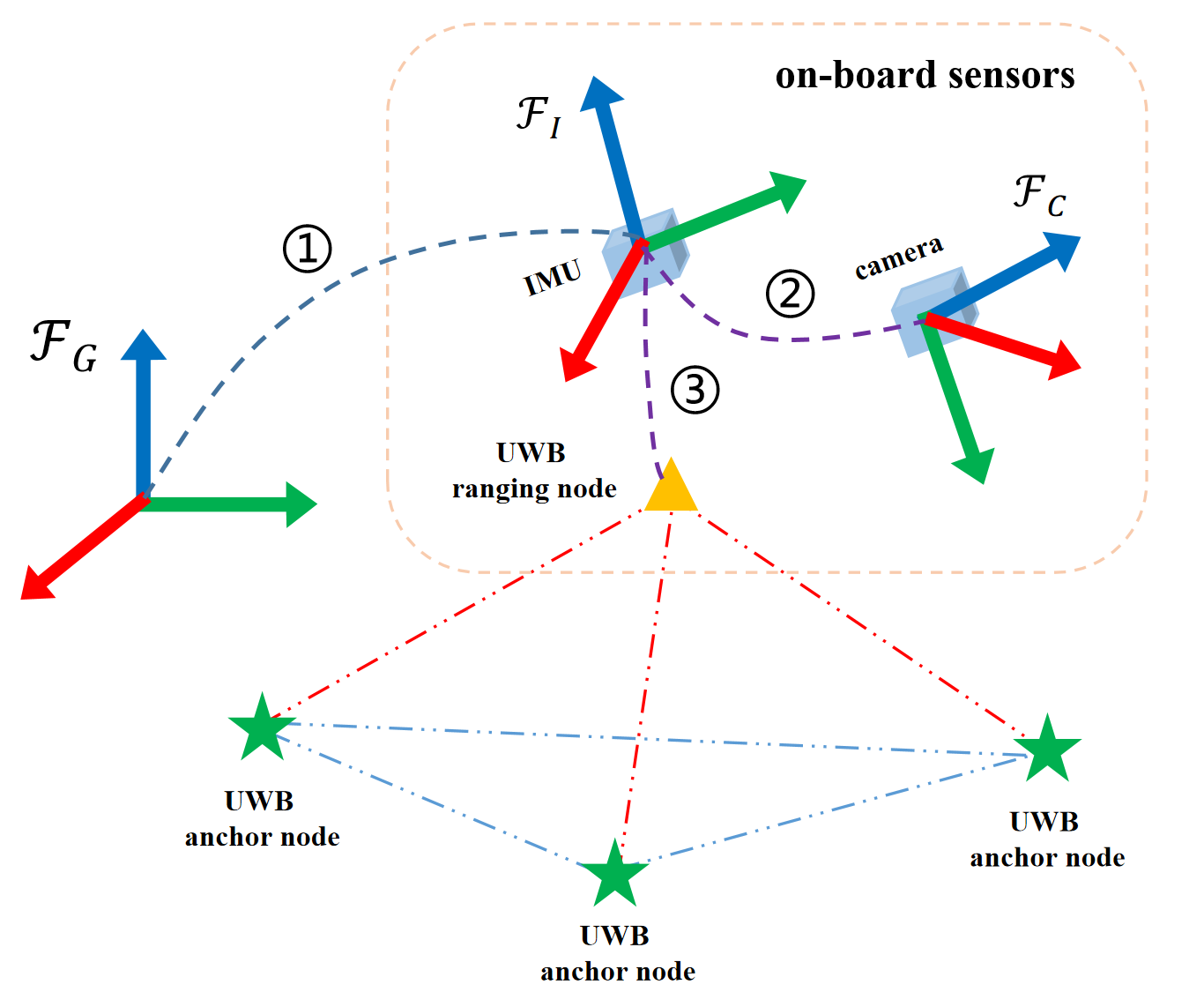}
  \captionsetup{font={footnotesize}}
  \caption{
    Illustration of the frames and sensors used in this work. Note that \textit{UWB ranging node} represents the UWB node equipped on the robot, and \textit{UWB anchor node} represents the UWB node that keep still in the global frame. The \textcircled{1} represents the robot pose at $t_k$; the \textcircled{2} is the extrinsic parameters between the IMU and camera; and \textcircled{3} represents the body-offset of the UWB ranging node. 
  }
  \label{fig:frame-sensors}
  \vspace{-6mm}
\end{figure}

\section{Estimator Design}
In this section, we describe in detail the filter based VIRO, which fuses measurements from UWB, cameras, and IMU in a MSCKF framework. We assume that there are three UWB anchors available in the environment, and we need to estimate their global positions simultaneously. An illustration of our system can be seen in \ref{fig:first-fig}, where the dash lines represents sensor measurements, and we write the states needs to be estimated with blue font.

Before describing our estimator, we first present some notations. There are three corrdinate frames in this paper, which are the \textit{global frame} $\mathcal F_G$, the \textit{IMU frame} $\mathcal F_I$ and the camera frame $\mathcal F_C$. Fig.\ref{fig:frame-sensors} illustrates the frames and sensors used in the VIRO. 

\subsection{State Vector}

The state vector of the proposed VIRO at time $t_k$ consists of the current IMU states, 
the position of SLAM features, 
the position of UWB anchors,  
and a short window and a long window which contain cloned IMU poses corresponding to the past images:
\begin{align} 
\label{eq:msckf-state} 
\mathbf x_k &= \begin{bmatrix} \mathbf x_{I,k}^\top & {^G \mathbf p_{f,k}^\top} & \mathbf x_{a,k}^\top & \mathbf{x}_{cl,k}^\top \!&\! \mathbf{x}_{\ell,k}^\top \end{bmatrix}^\top \\[3pt]
\label{eq:msckf-imu-state}
\mathbf x_{I,k} &= \begin{bmatrix} ^I_G\bar{\mathbf q}_k^\top & \mathbf b_{g,k}^\top & ^G\mathbf v_{I,k}^\top & \mathbf b_{a,k}^\top & ^G\mathbf p_{I,k}^\top \end{bmatrix}^\top \\[3pt]
\label{eq:msckf-anchor-state}
\mathbf x_{a,k} &= \begin{bmatrix}
  {^G \mathbf p_{a_1}^\top} \!&\! {^G \mathbf p_{a_2}^\top} \!&\! {^G \mathbf p_{a_3}^\top}
\end{bmatrix} ^\top \\[3pt]
\label{eq:msckf-clone-state}
\mathbf{x}_{cl,k} &= \scalemath{0.95} {
\begin{bmatrix}
{}^I_G{\bar{\mathbf q}_k}^{\top} \!&\! {}^G\mathbf{p}_{I,k}^{\top} \!&\! \cdots \!&\! {}^I_G{\bar{\mathbf q}_{k-m+1}}^{\top} \!&\! {}^G\mathbf{p}_{I,{k-m+1}}^{\top}
\end{bmatrix}^{\top} } \\[3pt]
\label{eq:msckf-long-window}
\mathbf{x}_{\ell,k} &= \scalemath{0.95} {
\begin{bmatrix}
{}^I_G{\bar{\mathbf q}_{\ell_1}}^{\top} \!&\! {}^G\mathbf{p}_{I,\ell_1}^{\top} \!&\! \cdots \!&\! {}^I_G{\bar{\mathbf q}_{\ell_M}}^{\top} \!&\! {}^G\mathbf{p}_{I,{\ell_M}}^{\top} 
\end{bmatrix}^{\top} } 
\end{align}
where $^G\mathbf v_I$ is the IMU velocity in the global frame; 
$\mathbf b_g$ and $\mathbf b_a$ denote the gyroscope and accelerator biases;
${^G \mathbf p_{f,k}}$ is the feature's position expressed in the global frame;
$^G \mathbf p_{a_i} (i=1,2,3)$ are the position of UWB anchors expressed in the global frame;
$\{{}^I_G{\bar{\mathbf q}_{k-i}},  {}^G\mathbf{p}_{I,{k-i}} \}$ ($i=0,\cdots,m-1$) are the cloned IMU poses at time $t_{k-i}$;
and $\{{}^I_G{\bar{\mathbf q}_{\ell_j}},  {}^G\mathbf{p}_{I,{\ell_j}} \}$ ($j=1,\cdots,M$) are the long window for UWB anchor's initialization, which will be introduced in Sec.\ref{sec:UWB-init}.

\subsection{IMU Propagation} \label{sec:imu-prop}

The state is propagated forward with the IMU linear acceleration and angular velocity:
\begin{align}
\bm\omega_m(t_k) &= {^I\bm\omega}(t_k) + \mathbf b_g(t_k) + \mathbf n_g(t_k) \label{eq:imu-wm}\\[3pt]
\mathbf a_m(t_k) &= \scalemath{0.95} {
^I_G\mathbf R(t_k)
\left( {^G\mathbf a}_I(t_k) + {^G\mathbf g}  \right) + \mathbf b_a(t_k) + \mathbf n_a(t_k) }
\label{eq:imu-am}
\end{align}
where $\bm\omega_m$ and $\mathbf a_m$ are the raw inertial measurement data, ${^G\mathbf g}$ is the gravitational acceleration expressed in $\mathcal F_G$, 
and $\mathbf n_g$ and $\mathbf n_a$ are zero-mean white Gaussian noise.
We propagate the state estimate and the covariance from time $t_k$ to $t_{k+1}$ based on the inertial kinematic model $\mathbf f (\cdot)$\cite{MSCKF}:
\begin{align}
\hat{\mathbf x}_{k+1|k} &= \mathbf f (\hat{\mathbf x}_{k|k}, \mathbf a_m(t_k:t_{k+1}), \bm \omega_m(t_k:t_{k+1}), \mathbf 0, \mathbf 0) \label{eq:imu-state-est-prop} \\[3pt] 
\mathbf P_{k+1|k} &= \bm\Phi(t_{k+1},t_k)  \mathbf P_{k|k} \bm \Phi(t_{k+1},t_k)^\top + \mathbf Q_{d,k} \label{eq:cov_propagate}
\end{align}
where the zero-mean noise vectors are represented by the last two $\mathbf 0$ entries of \eqref{eq:imu-state-est-prop}, and $\bm \Phi_k$ and $\mathbf Q$ are the state transition matrix and discrete noise covariance.

\subsection{Camera measurement model} \label{sec:cam-meas-model}
In this subsection, we review concisely the visual observation model with the calibrated perspective camera assumption.
Specifically, at time $t_k$, the position of a tracked feature point in the camera frame is $^{C_j}\mathbf p_{f} = [ x_j ~ y_j ~ z_j ]^\top$.
The feature measurement can be obtained by projecting its 3D pose onto the image plane with the projection model. 
For $j=k+1,\cdots,k-m$, a range-bear measurement model can be written as:
\begin{align} \label{eq:meas-model}
  \mathbf z_{n,j} & = \Pi({^{C_j} \mathbf p_f}) + \mathbf n_{z}
  = \frac{1}{z_j} \begin{bmatrix}
    x_j \\ y_j
  \end{bmatrix} + \mathbf n_z \\[3pt]
  {^{C_j}\mathbf p_{f}} &= 
  {^C_I\mathbf R} {^{I_j}_G\mathbf R} \left({^G\mathbf p_{f}} - {^G\mathbf p_{I,j}} \right) + {^C\mathbf p_I}
\label{eq:meas-eq}
\end{align}
where ${^G\mathbf p_{f}}$ is the 3D position of the tracked feature in $\mathcal F_G$; 
and $\mathbf n_{(r)}$ and $\mathbf n_{(b)}$ are the zero-mean white Gaussian measurement noise of range and bear measurements. We refer readers to \cite{MSCKF} for detailed presentation of visual update and null-space projection. 

\subsection{Ranging Update} \label{sec:ranging-update}
We leverage UWB ranging measurements to update the robot pose and the global position of UWB anchor nodes. Following \cite{nguyen2021ntuviral}, the UWB ranging measurement from the i-th UWB anchor node, $d_{r_i}$, can be modeled as:
\begin{align} \label{eq:uwb-ranging-model}
d_{r_i} &= h(\mathbf x_I, {^G \mathbf p_{a_i}}) \nonumber  \\[3pt]
 &= \Vert {^G \mathbf p_I} + {^I_G \mathbf R ^\top} {^I \mathbf p_r} - {^G \mathbf p_{a_i}} \Vert + d_{bias}
\end{align}
where $\{{^I_G \mathbf R}, {^G \mathbf p_I} \}$ is the IMU pose;
${^I \mathbf p_r}$ is the position of UWB ranging node expressed in the IMU frame, which can be calibrated offline; 
${^G \mathbf p_{a_i}}$ is the global position of the i-th UWB anchor node;
and $d_{bias}$ is the bias of distance measurements, which can be calibrated offline\cite{uwb_bias_calib}.

The distance measurements between UWB anchor nodes are available to the robot by robot-node communication. The distance measurement between the i-th UWB anchor and the j-th UWB anchor, $d_{e_{ij}}$, can be modeled as:
\begin{align} \label{eq:uwb-echo-model}
d_{e_{ij}} = \Vert {^G \mathbf p_{a_i}} - {^G \mathbf p_{a_j}} \Vert + d_{bias}
\end{align}
where ${^G \mathbf p_{a_i}}$ and ${^G \mathbf p_{a_j}}$ are the global position of the i-th UWB anchor node and the j-th UWB anchor node;
and $d_{bias}$ is the bias of distance measurements, which can be calibrated offline\cite{uwb_bias_calib}.

We can derive a concise expression of Jacobian by defining $d_{2r_i} = (d_{r_i}-d_{bias})^2 = \Vert {^G \mathbf p_I} + {^I_G \mathbf R ^\top} {^I \mathbf p_r} - {^G \mathbf p_{a_i}} \Vert^2$. Then we perturb $d_{2r_i}$ to derive the measurement Jacobians:
\begin{align} \label{eq:ranging-jacobian}
    \frac{\partial \tilde{d}_{2r_i}}{\partial {^G \tilde{\mathbf p}_I}} &= 2\cdot({^G \hat{\mathbf p}_r} - {^G \hat{\mathbf p}_{a_i}})^\top \\
    \frac{\partial \tilde{d}_{2r_i}}{\partial {^I_G \tilde{\bm \theta}}} &= -2\cdot({^G \hat{\mathbf p}_{a_i}} - {^G \hat{\mathbf p}_u})^\top\cdot {^I_G \hat{\mathbf R}^\top} \lfloor {^I \mathbf p_r} \times \rfloor \\
    \frac{\partial \tilde{d}_{2r_i}}{\partial {^G \tilde{\mathbf p}_{a_i}}} &= 2\cdot({^G \hat{\mathbf p}_{a_i}} - {^G \hat{\mathbf p}_r})^\top
\end{align}
where ${^G \hat{\mathbf p}_r} = {^G \mathbf p_I} + {^I_G \mathbf R ^\top} {^I \mathbf p_r}$ is the position of UWB ranging node represented in the global frame; and ${^I \mathbf p_r}$ is calibrated offline.

Also in the same way, by defining $d_{2e_{ij}} = (d_{e_{ij}} - d_{bias})^2$, we have:
\begin{align} \label{eq:echo-jacobian}
    \frac{\partial \tilde{d}_{2e_{ij}}}{\partial {^G \tilde{\mathbf p}_{a_i}}} &= 2\cdot({^G \hat{\mathbf p}_{a_i}} - {^G \hat{\mathbf p}_{a_j}})^\top \\
    \frac{\partial \tilde{d}_{2e_{ij}}}{\partial {^G \tilde{\mathbf p}_{a_j}}} &= 2\cdot({^G \hat{\mathbf p}_{a_j}} - {^G \hat{\mathbf p}_{a_i}})^\top    
\end{align}
Now we can perform EKF update with the square-unbiased ranging measurements.

\subsection{Data synchronization}
In this work, we interpolate the ranging measurements to synchronize the visual and ranging measurements. 
Assume that we receive an image (or stereo images) at times $t_k$, and the adjacent ranging measurements are received at $t_\alpha$ and $t_\beta$, where $t_\alpha < t_k < t_\beta$, the aligned ranging measurement is computed by:
\begin{align} \label{eq:ranging-interpolate}
d_{r,k} = d_{r, \alpha} + \frac{t_k - t_\alpha}{t_\beta - t_\alpha} (d_{r, \beta} - d_{r, \alpha})
\end{align}
where $d_{r, \alpha}$ and $d_{r, \beta}$ are ranging measurements received at $t_\alpha$ and $t_\beta$. Note that both $d_{r, \alpha}$ and $d_{r, \beta}$ should be from the same UWB anchor node. In practice, if the time offsets $t_k - t_\alpha$ and $t_\beta - t_k$ are bigger than a threshold, we discard this pair of ranging measurements.

\section{UWB Initialization} \label{sec:UWB-init}

We design a long-short-window structure to estimate the global position as well as the initial covariance. The short window is the same as the MSCKF sliding window, while the long window contains a list of key-frames selected according to the distance interval. Once the length of the long window is over a threshold, we leverage the poses in the long window and corresponding ranging measurements to estimate all UWB anchors' global positions.

Assume that there are $n$ poses in the long-short window, we build the linear problem following \cite{uwb_rough_init}:
\begin{align} \label{eq:uwb-linear-init}
\scalemath{0.85} {
\underbrace{
\begin{bmatrix}
-2 x_r(t_1) & -2 y_r(t_1) & -2 z_r(t_1) & 1 \\
-2 x_r(t_2) & -2 y_r(t_2) & -2 z_r(t_2) & 1 \\
\vdots      & \vdots      & \vdots      & 1 \\
-2 x_r(t_n) & -2 y_r(t_n) & -2 z_r(t_n) & 1 
\end{bmatrix}
}_{\mathbf A} \cdot
\underbrace{
\begin{bmatrix}
x_{a_i} \\ y_{a_i} \\ z_{a_i} \\ D_{a_i}
\end{bmatrix} 
}_{\mathbf x}
= \underbrace{
\begin{bmatrix}
\Delta(t_1) \\ \Delta(t_2) \\ \vdots \\ \Delta(t_n)
\end{bmatrix}
}_{\mathbf b}
} 
\end{align}
with
\begin{align}
D_{a_i} &= x_{a_i}^2 + y_{a_i}^2 + z_{a_i}^2 \\[3pt]
\Delta  &= (d_{r_i} - d_{bias})^2 - (x_r^2 + y_r^2 + z_r^2)
\end{align}
where ${^G \mathbf p_r} = {^G \mathbf p_I} + {^I_G \mathbf R^\top}\cdot {^I \mathbf p_r} = \begin{bmatrix} x_r & y_r & z_r\end{bmatrix}^\top$ is the global position of the UWB ranging node;
${^G \mathbf p_{a_i}} = \begin{bmatrix}x_{a_i} & y_{a_i} & z_{a_i}\end{bmatrix}^\top$ is the i-th anchor's global position;
$d_{r_i}$ is the raw measurement from the i-th UWB anchor node;
and $d_{bias}$ is the bias of ranging measurements.
By solving \eqref{eq:uwb-linear-init}, we can roughly estimate the global position of the i-th UWB anchor node.

Since the matrix $\mathbf A$ in \eqref{eq:uwb-linear-init} is usually ill-conditioned, we jointly optimize all anchor nodes' global position by minimizing the following cost function:
\begin{align} \label{eq:uwb-nonlinear-init}
\sum_{^G \mathbf p_{a_i}} \sum_{k=1 \cdots n} \frac{(d_{r_i,k} - d_{bias}-\Vert {^G \mathbf p_{a_i}} - {^G \mathbf p_{r,k}} \Vert)^2}{\sigma_r^2} + \ \ \ \  & \nonumber \\[3pt]
\sum_{\scalemath{.7}{{^G \mathbf p_{a_i}}, {^G \mathbf p_{a_j}}}}
\sum_{d_e \in \Omega_{ij}} 
\frac{(d_e - d_{bias} - \Vert {^G \mathbf p_{a_i}} - {^G \mathbf p_{a_j}} \Vert)^2}{\sigma_e^2} &
\end{align}
where $d_{r_i,k}$ is the ranging measurement from the i-th UWB anchor node at time-step $k$;
${^G \mathbf p_{r,k}} = {^G \mathbf p_{I,k}} + {^I_G \mathbf R^\top} \cdot {^I \mathbf p_r}$ is the global position of the UWB ranging node;
${\sigma_r^2}$ is the noise density of ranging measurements;
$d_e$ is the ranging measurements between two UWB anchor nodes;
$\Omega_{ij}$ is the set of measurements between the i-th and the j-th UWB anchor nodes during the last $n$ time-step;
$\sigma_e^2$ is the noise density of ranging measurements between UWB anchor nodes.

\begin{figure}
  \centering 
    \includegraphics[width=0.85\columnwidth,trim=50 180 380 100,clip]{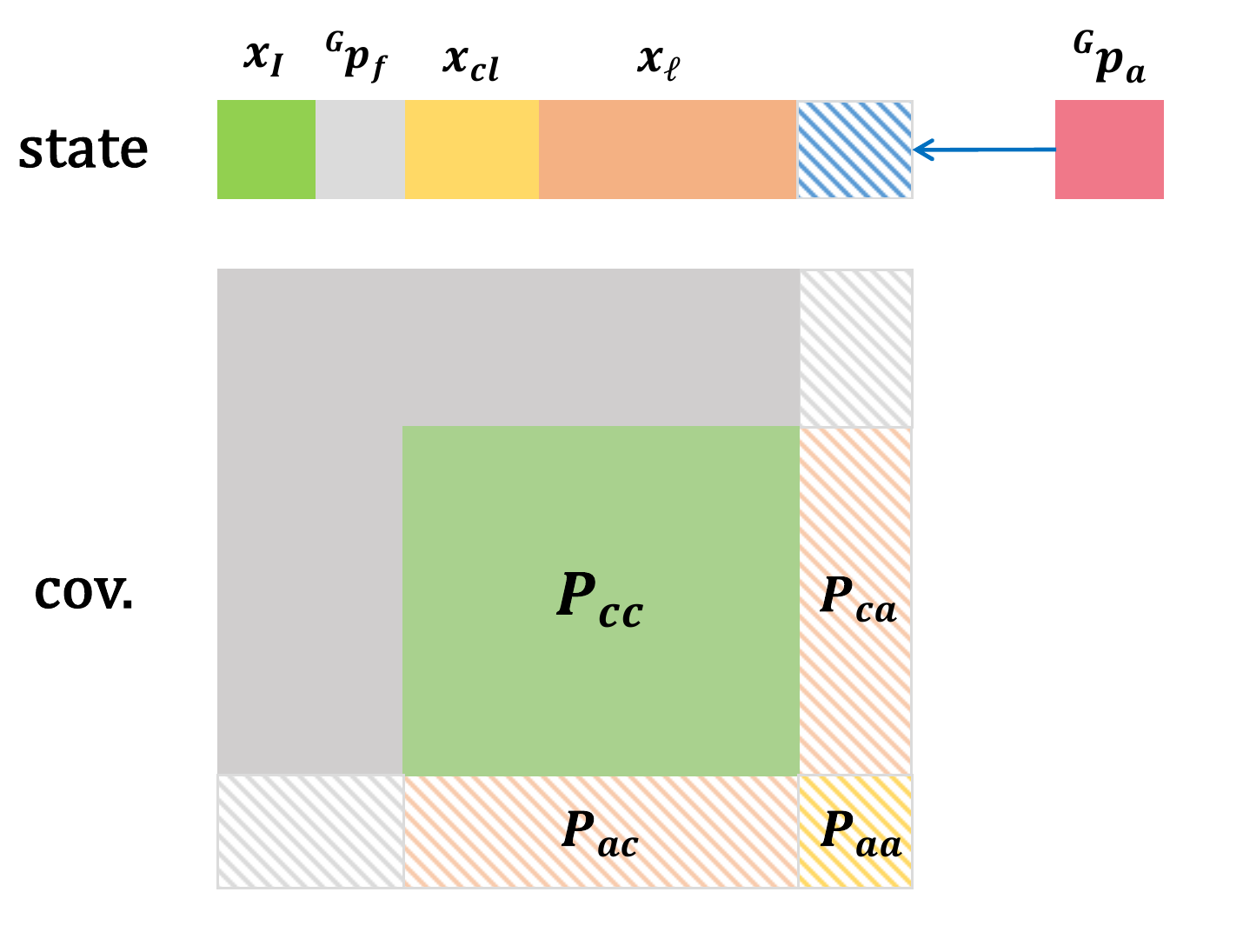}
  \captionsetup{font={footnotesize }}
  \caption{
    Illustration of the state augmentation step after the UWB anchors' initialization. We evaluate the covariance matrix blocks related to the UWB anchor positions and augment the state vector and covariance matrix after that.
  }
  \label{fig:state-augment}
  \vspace{-6mm}
\end{figure}

By solving \eqref{eq:uwb-nonlinear-init} with Gaussian-Newton method, we can obtain refined global positions of all UWB anchor nodes, $^G \hat{\mathbf p}_{a_i} (i=1,2,3)$ . The last step of our UWB initialization is to estimate the initial covariance matrix. In this work, we estimate the covariance of the i-th anchor node with all of the $n$ ranging measurements: $\mathbf d_{r_i} = \begin{bmatrix} d_{r_i,1} & d_{r_i,2} & \cdots & d_{r_i,n} \end{bmatrix}^\top$, where $\scalemath{.9}{d_{r_i,k} = h_k(\mathbf x_{I,k}, {^G \mathbf p_{a_i}}) + n_{i,k}}$ is the ranging measurement from the i-th anchor node at the k-th time-step.
Following \cite{li2014visual}, we linearize \eqref{eq:uwb-ranging-model}:
\begin{align} \label{eq:long-window-residual}
\mathbf r_i &= \mathbf d_{r_i} - \mathbf h(\hat{\mathbf x}_I, {^G \hat{\mathbf p}_{a_i}}) \\[3pt]
&\simeq \mathbf H_x \tilde{\mathbf x}_I + \mathbf H_a {^G \tilde{\mathbf p}_{a_i}} + \mathbf n_i
\end{align}
Since the number of ranging measurements is larger than the dimension of  ${^G \tilde{\mathbf p}_{a_i}}$, we perform Givens-rotation to zero-out the excess rows in $\mathbf H_a$:
\begin{align} \label{eq:long-window-Jacobian}
    \begin{bmatrix}
    \mathbf r_{i1} \\ \mathbf r_{i2}
    \end{bmatrix} = \begin{bmatrix}
    \mathbf H_{x1} \\ \mathbf H_{x2}
    \end{bmatrix} \tilde{\mathbf x}_I +
    \begin{bmatrix}
    \mathbf H_{a1} \\ \mathbf 0
    \end{bmatrix} {^G \tilde{\mathbf p}_{a_i}} +
    \begin{bmatrix}
    \mathbf n_{i1} \\ \mathbf n_{i2}
    \end{bmatrix}
\end{align}
As a result, we evaluate the covariance matrix by:
\begin{align} 
    \label{eq:long-window-paa} \mathbf P_{aa} &= \mathbf H_{a1}^{-1} (\mathbf H_{x1} \mathbf P_{xx} \mathbf H_{x1}^\top + \sigma^2_r \mathbf I) \mathbf H_{a1}^{-\top} \\[3pt]
    \label{eq:long-window-pxa} \mathbf P_{xa} &= -\mathbf P_{xx} \mathbf H_{x1}^\top \mathbf H_{a1}^{-\top}
\end{align}
where $\sigma_r$ is the noise density of ranging measurements. Finally we augment the state and the covariance matrix.

\section{Consistency Analysis}
Without loss of generality, we study the observability based on a general EKF state vector with one visual feature and two UWB anchor nodes:
\begin{align}
    \scalemath{.9} {
    \mathbf x = \begin{bmatrix}
    {^I_G \bar{\mathbf q}^\top} & \mathbf b_g^\top & {^G \mathbf v_I^\top} & \mathbf b_a^\top & {^G \mathbf p_I^\top} & {^G \mathbf p_f^\top} & {^G \mathbf p_{a_1}^\top} & {^G \mathbf p_{a_2}^\top}
    \end{bmatrix}^\top }
\end{align}
where $\{{^I_G \bar{\mathbf q}}, {^G \mathbf p_I}\}$ is the IMU pose in the global frame; ${^G \mathbf v_I}$ is the IMU velocity; $\mathbf b_g$ and $\mathbf b_a$ are the IMU bias; ${^G \mathbf f}$ is the visual feature's global position; and ${^G \mathbf p_{a_i}}, (i=1,2)$ is the global position of UWB anchor node. Note that by considering two UWB anchor nodes, we can take ranging measurements between anchors into our analysis, and this setting can be easily generalized to the cases with multiple UWB anchors.

\subsection{Estimator Observability} \label{sec:observability-study}

The observability matrix has the following form:
\begin{align}
    \mathcal O = \begin{bmatrix}
    \mathbf H_1 \\
    \mathbf H_2 \bm \Phi_{2,1} \\
    \vdots \\
    \mathbf H_n \bm \Phi_{n,1}
    \end{bmatrix}
\end{align}
where $\mathbf H_k\ (k=1\cdots n)$ is the measurement Jacobian to update the state; and $\bm \Phi_{k,1}\ ( k=1\cdots n)$ is the state transition matrix described in Sec.\ref{sec:imu-prop}. Since we update the state with measurements from cameras and UWB anchors, the k-th block row of $\mathcal O$ can be written as:
\begin{align} 
    \mathbf H_k \bm \Phi_{k,1} = \begin{bmatrix}
    \mathbf H_{v,k} \bm \Phi_{k,1} \\
    \mathbf H_{u,k} \bm \Phi_{k,1}
    \end{bmatrix}
\end{align}
where the visual part $\mathbf H_{v,k} \bm \Phi_{k,1}$ has been studied in previous work\cite{huang_analysis}\cite{Consistency_ana}. In this paper we study the form and null space of the ranging part $\mathbf H_{u,k} \bm \Phi_{k,1}$.

During the IMU propagation described in Sec.\ref{sec:imu-prop}, the state transition matrix from the starting time-step $t_1$ to $t_k$ has the following form:
\begin{align} \label{eq:state-transition}
\bm \Phi_{k,1} &= \begin{bmatrix}
    \bm \Phi_{I_K, I_1} & \mathbf 0_{15\times 9} \\
    \mathbf 0_{9\times 15} & \mathbf I_{9\times 9}
\end{bmatrix} \\[3pt]
\bm \Phi_{I_k, I_1} &= \begin{bmatrix}
\bm \phi_{11} & \bm \phi_{12} & \mathbf 0_3   & \mathbf 0_3   & \mathbf 0_3 \\
\mathbf 0_3   & \mathbf I_3   & \mathbf 0_3   & \mathbf 0_3   & \mathbf 0_3 \\
\bm \phi_{31} & \bm \phi_{32} & \mathbf I_3   & \bm \phi_{34} & \mathbf 0_3 \\
\mathbf 0_3   & \mathbf 0_3   & \mathbf 0_3   & \mathbf I_3   & \mathbf 0_3 \\
\bm \phi_{51} & \bm \phi_{52} & \bm \phi_{53} & \bm \phi_{54} & \mathbf I_3
\end{bmatrix}
\end{align}
where the analytical form of $\bm \Phi_{I_K, I_1}$ can be found in \cite{Consistency_ana}.
From Sec.\ref{sec:ranging-update}, the Jacobian matrix of UWB ranging measurements is:
\begin{align} \label{eq:uwb-jacobian-matrix}
\mathbf H_{u,k} = \scalemath{.8} {\begin{bmatrix}
\bm \zeta_{a_1} \cdot {^{I_k}_G \mathbf R^\top} \lfloor {^I \mathbf p_r} \times \rfloor & \mathbf 0_{3\times 9} & - \bm \zeta_{a_1} & \mathbf 0 & \bm \zeta_{a_1} & \mathbf 0 \\
\bm \zeta_{a_2} \cdot {^{I_k}_G \mathbf R^\top} \lfloor {^I \mathbf p_r} \times \rfloor & \mathbf 0_{3\times 9} & - \bm \zeta_{a_2} & \mathbf 0 & \mathbf 0 & \bm \zeta_{a_2} \\
\mathbf 0 & \mathbf 0_{3\times 9} & \mathbf 0 & \mathbf 0 & \bm \Lambda_a & - \bm \Lambda_a 
\end{bmatrix}
}
\end{align}
$\mathbf H_{u,k}$ is a $3\times 24$ matrix, and 
\begin{align}
    \bm \zeta_{a_i} &= 2\cdot ({^G \mathbf p_{a_i}} - {^G \mathbf p_{I,k}} - {^{I_k}_G \mathbf R^\top} \cdot {^I \mathbf p_u})^\top,\ \ i=1,2 \\[3pt]
    \bm \Lambda_a &= 2\cdot ({^G \mathbf p_{a_1}} - {^G \mathbf p_{a_2}})^\top
\end{align}
Note that the first two block rows of \eqref{eq:uwb-jacobian-matrix} corresponds to the ranging measurements between UWB anchor nodes and the UWB ranging node, while the last block row corresponds to the ranging measurements between two UWB anchor nodes. 
By combining \eqref{eq:state-transition} and \eqref{eq:uwb-jacobian-matrix}, we have:
\begin{align} \label{eq:UWB-obs-matrix}
&\mathbf H_{k,u} \bm \Phi_{k,1} = \nonumber \\[3pt]
& \scalemath{.8} {
\begin{bmatrix}
\bm \Gamma_{11} & \bm \Gamma_{12} & -\bm \zeta_{a_1} \bm \phi_{53} & -\bm \zeta_{a_1} \bm \phi_{54} & -\bm \zeta_{a_1} & \mathbf 0 & \bm \zeta_{a_1} & \mathbf 0 \\
\bm \Gamma_{21} & \bm \Gamma_{22} & -\bm \zeta_{a_1} \bm \phi_{53} & -\bm \zeta_{a_2} \bm \phi_{54} & -\bm \zeta_{a_2} & \mathbf 0 & \mathbf 0 & \bm \zeta_{a_1} \\
\mathbf 0 & \mathbf 0 & \mathbf 0 & \mathbf 0 & \mathbf 0 & \mathbf 0 & \bm \Lambda_a & -\bm \Lambda_a
\end{bmatrix}
}
\end{align}
which is a $3\times 24$ matrix, and 
\begin{align}
\bm \Gamma_{i1} &= \bm \zeta_{a_i}\cdot {^{I_k}_G \mathbf R^\top} \lfloor {^I \mathbf p_r} \times \rfloor \bm \phi_{11} - \bm \zeta_{a_i} \bm \phi_{51}, \ \ i=1,2 \\[3pt]
\bm \Gamma_{i2} &= \bm \zeta_{a_i}\cdot {^{I_k}_G \mathbf R^\top} \lfloor {^I \mathbf p_r} \times \rfloor \bm \phi_{12} - \bm \zeta_{a_i} \bm \phi_{52}, \ \ i=1,2 
\end{align}

\begin{figure*}[ht] 
  \centering 
  \subfloat[] {
    \includegraphics[width=0.65\columnwidth,trim=110 270 120 280,clip]{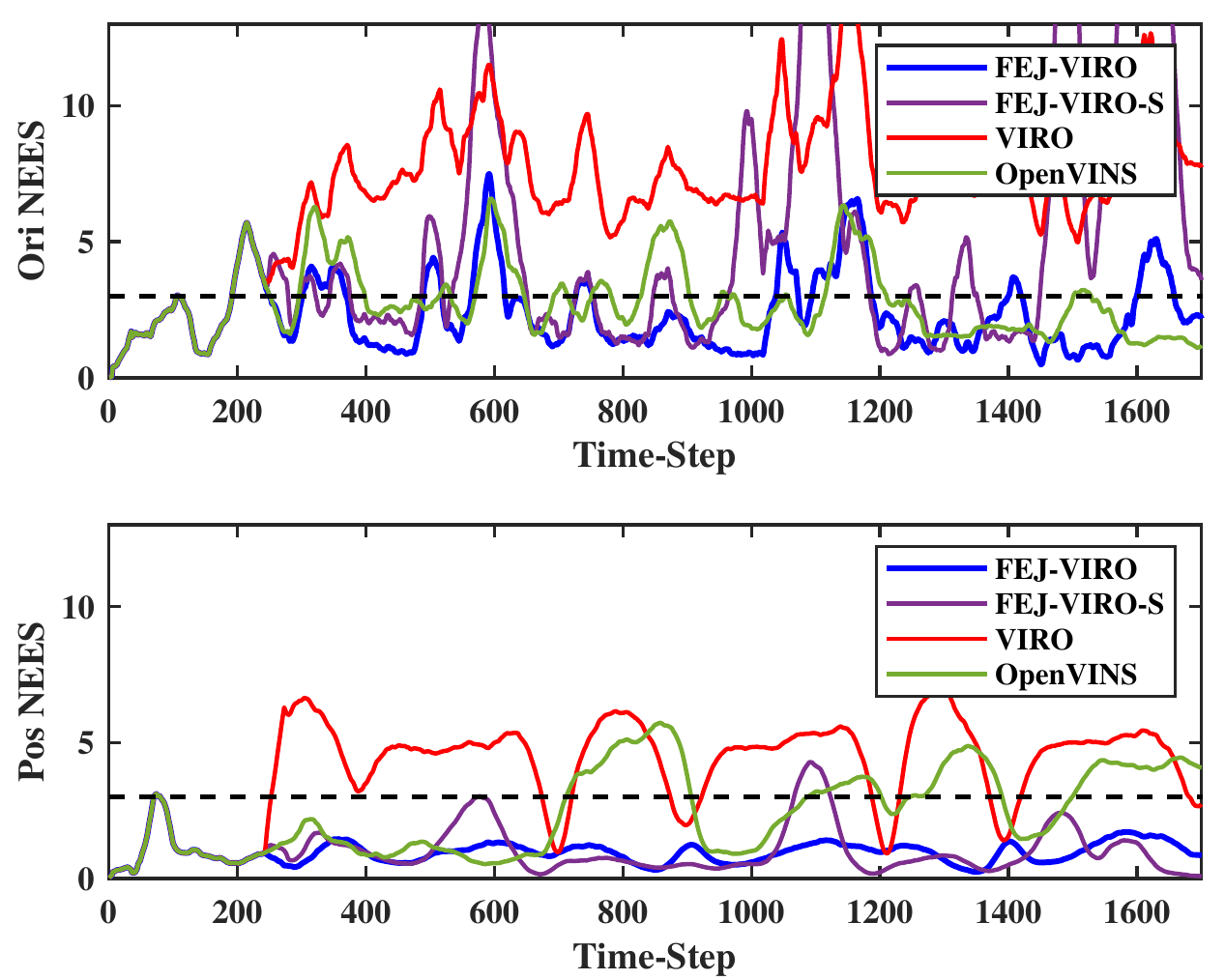}
  } 
  \subfloat[] {
    \includegraphics[width=0.65\columnwidth,trim=110 270 120 280,clip]{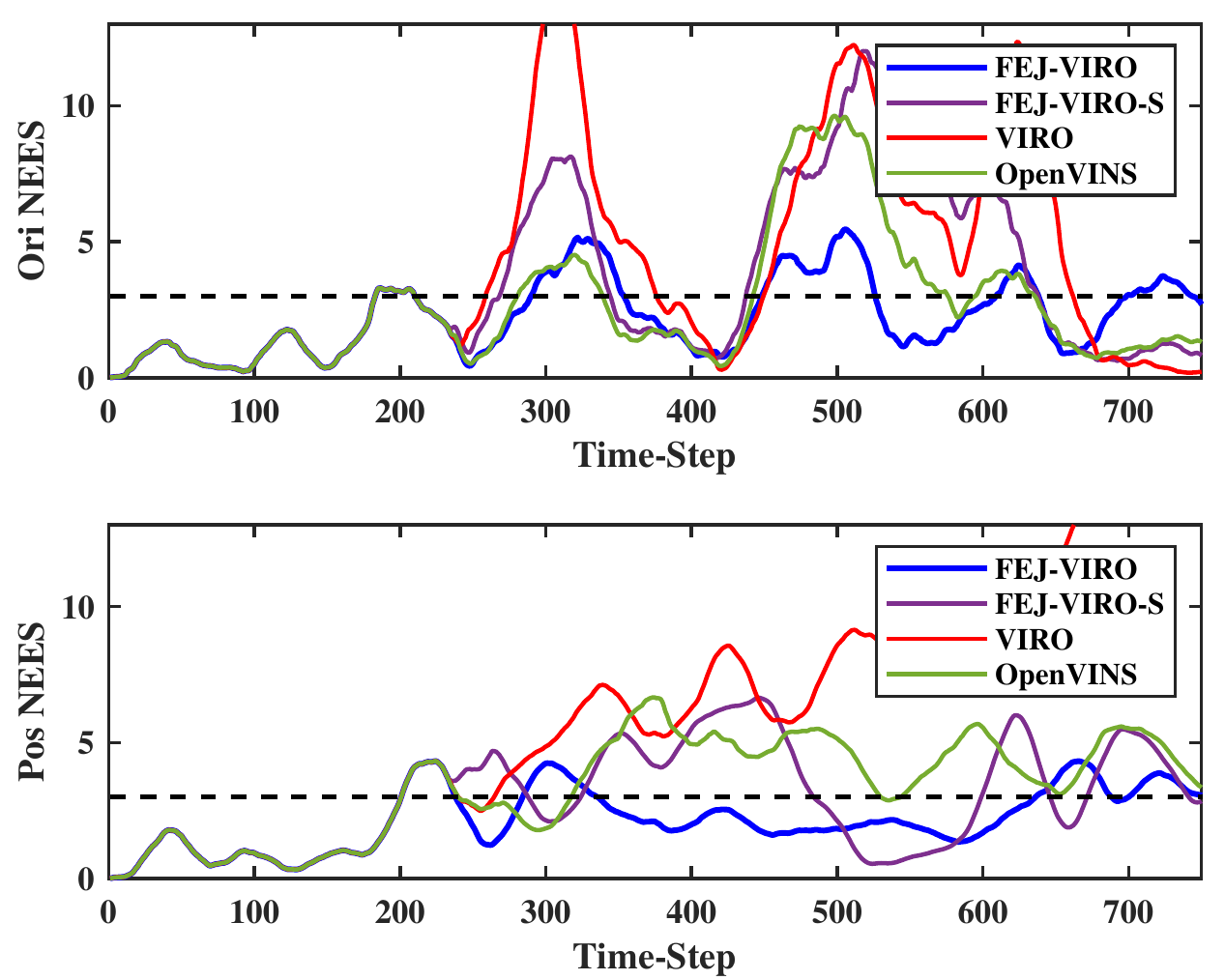}
  } 
  \subfloat[] {
    \includegraphics[width=0.65\columnwidth,trim=110 270 120 280,clip]{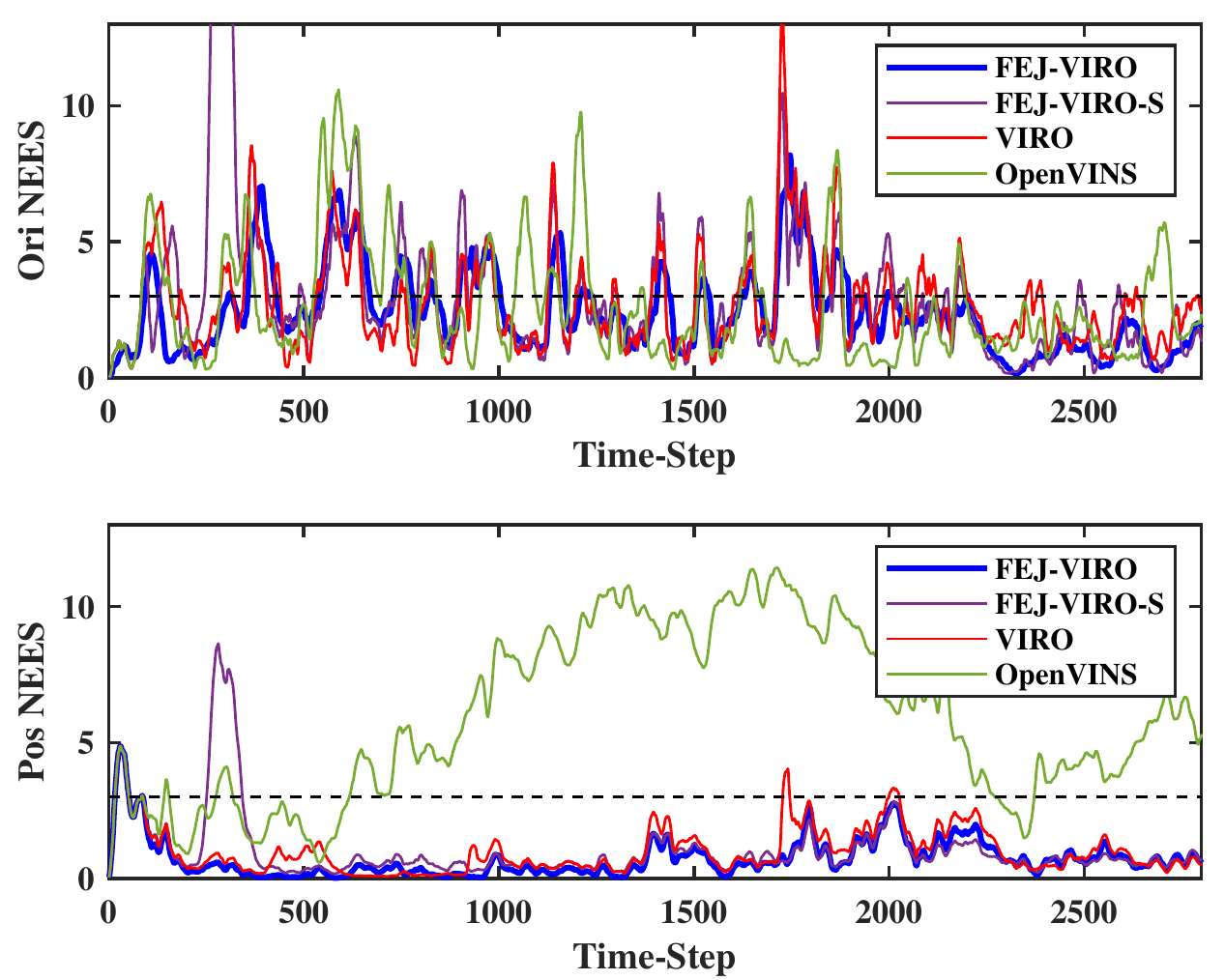}
  } 
  \captionsetup{font={footnotesize }}
  \caption{Normalized estimation error squared (NEES) results of \textit{OpenVINS}, \textit{VIRO}, \textit{FEJ-VIRO-S} and \textit{FEJ-VIRO} tested on three different simulation datasets. Where the \textit{FEJ-VIRO-S} is the \textit{FEJ-VIRO} without long-window. We enabled FEJ for \textit{OpenVINS} and for the visual-inertial part of \textit{VIRO}, so the inconsistency of \textit{VIRO} only results from state update with UWB ranging measurements. } 
  \vspace{-3mm}
  \label{fig:nees}
\end{figure*}


By study the analytical form of the $\mathcal O$, we have the following \textit{theorem}:
\begin{theorem} \label{thm:nullspace}
(Ideal observability proprieties of VIRO) The right null-space $\mathbf N_o$ of the observability matrix $\mathcal O$ of the linearized visual-inertial-ranging estimator
\begin{align}
    \mathcal O \cdot \mathbf N_o = \mathbf 0
\end{align}
is spanned by the following four directions:
\begin{align} \label{eq:null-space}
\mathbf N_o &=
    \begin{bmatrix}
    \mathbf 0_3 &  {^{I_1}_G \mathbf R} \cdot {^G \mathbf g} \\
    \mathbf 0_3 & \mathbf 0_3 \\
    \mathbf 0_3 & -\lfloor {^G \mathbf v_{I_1}} \times \rfloor \cdot {^G \mathbf g}\\
    \mathbf 0_3 & \mathbf 0_3 \\
    \mathbf I_3 & -\lfloor {^G \mathbf p_{I_1}}\times \rfloor \cdot {^G \mathbf g} \\
    \mathbf I_3 & -\lfloor {^G \mathbf p_f}\times \rfloor \cdot {^G \mathbf g} \\
    \mathbf I_3 & -\lfloor {^G \mathbf p_{a_1}} \times \rfloor {^G \mathbf g} \\
    \mathbf I_3 & -\lfloor {^G \mathbf p_{a_2}} \times \rfloor {^G \mathbf g}
    \end{bmatrix} \nonumber \\[3pt]
&= \begin{bmatrix} \mathbf N_1 & \mathbf N_2 \end{bmatrix}
\end{align}
\end{theorem}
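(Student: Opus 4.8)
The plan is to prove the theorem in two stages: first verify that each of the four columns of $\mathbf{N}_o$ lies in the right null-space of $\mathcal{O}$, i.e. that $\mathbf{H}_k \bm{\Phi}_{k,1} \mathbf{N}_o = \mathbf{0}$ for every block row $k$; and second, argue that the null-space is exactly four-dimensional so that these columns span it. Because each block row splits into a visual part $\mathbf{H}_{v,k}\bm{\Phi}_{k,1}$ and a ranging part $\mathbf{H}_{u,k}\bm{\Phi}_{k,1}$, I would dispatch the two parts separately. The visual Jacobian carries zero columns on the anchor sub-states (images do not observe the anchors), so $\mathbf{H}_{v,k}\bm{\Phi}_{k,1}\mathbf{N}_o$ only involves the IMU-and-feature sub-block of $\mathbf{N}_o$, which is precisely the classical visual-inertial null-space; I would therefore cite \cite{Consistency_ana} to dispose of the visual rows and concentrate the new work on the ranging rows given in \eqref{eq:UWB-obs-matrix}.

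For the global-translation block $\mathbf{N}_1$ the verification is immediate. Since $\mathbf{N}_1$ places $\mathbf{I}_3$ on the IMU-position, feature, and both anchor sub-states and zeros elsewhere, each of the first two rows of \eqref{eq:UWB-obs-matrix} reduces to the position column $-\bm{\zeta}_{a_i}$ cancelling against the matching anchor column $+\bm{\zeta}_{a_i}$, while the anchor-anchor row gives $\bm{\Lambda}_a - \bm{\Lambda}_a = \mathbf{0}$. This simply reflects that ranges depend only on relative positions and are invariant under a global shift of all points.

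The rotation-about-gravity column $\mathbf{N}_2$ is the computational heart. Writing $\mathbf{d} = {^G\mathbf{p}_{a_1}} - {^G\mathbf{p}_{a_2}}$ so that $\bm{\Lambda}_a = 2\mathbf{d}^\top$, the anchor-anchor row evaluates to $-2\,\mathbf{d}^\top \lfloor \mathbf{d} \times \rfloor\, {^G\mathbf{g}}$, which vanishes by the identity $\mathbf{a}^\top \lfloor \mathbf{a} \times \rfloor = \mathbf{0}^\top$. The first two rows are the delicate case: I would substitute the analytic forms of the transition sub-blocks $\bm{\phi}_{11},\bm{\phi}_{12},\bm{\phi}_{51},\bm{\phi}_{52},\bm{\phi}_{53},\bm{\phi}_{54}$ from \cite{Consistency_ana} into $\bm{\Gamma}_{i1},\bm{\Gamma}_{i2}$ and the remaining columns, then pair them with the orientation, velocity, position and anchor entries of $\mathbf{N}_2$. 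The routine-but-careful step is to collect the $\lfloor \cdot \times \rfloor\,{^G\mathbf{g}}$ terms, repeatedly apply $\mathbf{a}^\top \lfloor \mathbf{b} \times \rfloor = (\mathbf{a}\times\mathbf{b})^\top$, and invoke the gravity relations satisfied by the $\bm{\phi}$ blocks (the same relations that render yaw unobservable for pure VIO); these force the $\bm{\zeta}_{a_i}$-weighted combination to telescope to zero.

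The main obstacle I anticipate is the second stage, namely that the null-space is no larger than four, i.e. $\mathrm{rank}\,\mathcal{O} = 20$. Rather than a brute-force rank computation I would argue structurally. The visual rows already determine the IMU-and-feature sub-state up to the standard four-dimensional gauge of translation and yaw. The ranging rows then rigidly tie each anchor to the robot trajectory: under a sufficiently exciting trajectory the anchor-bearing columns $\pm\bm{\zeta}_{a_i}$ and $\pm\bm{\Lambda}_a$ are linearly independent of that gauge subspace, so they eliminate all six anchor degrees of freedom except their coupling to the surviving global translation and yaw. Consequently no unobservable direction beyond $\mathbf{N}_o$ is created, and any right-null vector consistent across all block rows must be a global rigid motion preserving the gravity direction, which is exactly the span of the four columns of $\mathbf{N}_o$.
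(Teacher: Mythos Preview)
Your approach is essentially the same as the paper's: cite \cite{Consistency_ana} to dispose of the visual rows, verify $\mathbf{N}_1$ on the ranging rows by the trivial cancellation $-\bm{\zeta}_{a_i}+\bm{\zeta}_{a_i}=\mathbf{0}$ and $\bm{\Lambda}_a-\bm{\Lambda}_a=\mathbf{0}$, and then for $\mathbf{N}_2$ substitute the analytic $\bm{\phi}$-blocks, collapse the cross-product terms, and finish each range-to-anchor row with the identity $\mathbf{a}^\top\lfloor\mathbf{a}\times\rfloor=\mathbf{0}^\top$ applied to $\bm{\zeta}_{a_i}$. The paper's Appendix carries out exactly this computation row by row (its $\mathcal{M}_{k,1},\mathcal{M}_{k,2},\mathcal{M}_{k,3}$), arriving at $\bm{\zeta}_{a_i}\lfloor{}^{I_k}_G\mathbf{R}^\top{}^I\mathbf{p}_r+{}^G\mathbf{p}_{I_k}-{}^G\mathbf{p}_{a_i}\times\rfloor{}^G\mathbf{g}=0$ just as you outline.

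The one place you go beyond the paper is your second stage: the paper's proof stops after verifying $\mathcal{O}\mathbf{N}_o=\mathbf{0}$ and simply asserts that $\mathbf{N}_o$ \emph{spans} the null-space, without any rank argument. Your structural sketch (visual rows fix everything up to the four-dimensional VIO gauge; ranging rows then pin the anchors to that gauge under generic motion) is the right idea and is a genuine addition, though as written it remains heuristic rather than a full rank computation.
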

\begin{proof}
See \textbf{Appendix}.
\end{proof}

\begin{theorem} \label{thm:nullspace_real}
(Actual observability proprieties of VIRO) For the actual case where the ranging measurement Jacobians are evaluated at the estimated states. The right null-space $\mathbf N_o$ of the observability matrix $\mathcal O$ of the linearized visual-inertial-ranging estimator is spanned by the following three directions:
\begin{align} \label{eq:null-space}
\mathbf N_o &=
\begin{bmatrix}
    \mathbf 0_3 & \mathbf 0_3 & \mathbf 0_3 & \mathbf 0_3 & \mathbf I_3 & \mathbf I_3 & \mathbf I_3 & \mathbf I_3 
\end{bmatrix} ^\top
\end{align}
\end{theorem}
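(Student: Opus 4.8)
The plan is to reuse the machinery already assembled for Theorem~\ref{thm:nullspace} and to isolate exactly which of the four ideal directions $\mathbf N_1$ (global translation) and $\mathbf N_2$ (global rotation about gravity) survive once every block row $\mathbf H_{u,k}\bm\Phi_{k,1}$ is linearized at the \emph{estimated} state $\hat{\mathbf x}_k$ rather than the true state. The observability matrix $\mathcal O$ keeps the structure of \eqref{eq:UWB-obs-matrix}; what changes is that $\bm\zeta_{a_i}$, $\bm\Lambda_a$ inside $\mathbf H_{u,k}$ and the analytical blocks $\bm\phi_{ij}$ inside $\bm\Phi_{k,1}$ are now evaluated at estimates that differ from step to step and no longer satisfy the kinematic identities enjoyed by the true trajectory. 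I would therefore compute $\mathcal O\,\mathbf N_1$ and $\mathcal O\,\mathbf N_2$ separately and show that the first vanishes identically while the second does not.

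For $\mathbf N_1 = [\,\mathbf 0\ \mathbf 0\ \mathbf 0\ \mathbf 0\ \mathbf I\ \mathbf I\ \mathbf I\ \mathbf I\,]^\top$, the product with each ranging row of \eqref{eq:UWB-obs-matrix} selects only the four position columns, giving $-\bm\zeta_{a_i}+\bm\zeta_{a_i}=\mathbf 0$ for the two anchor-to-ranging-node rows and $\bm\Lambda_a-\bm\Lambda_a=\mathbf 0$ for the anchor-to-anchor row. These cancellations are \emph{purely structural}: the Jacobian of a relative-distance residual with respect to $^G\mathbf p_I$ (or $^G\mathbf p_{a_j}$) is the exact negative of its Jacobian with respect to $^G\mathbf p_{a_i}$, an identity that holds at any linearization point provided a single $\mathbf H_{u,k}$ uses one estimate internally. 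Combined with the known fact that the visual block $\mathbf H_{v,k}\bm\Phi_{k,1}$ (with FEJ enforced on the visual--inertial part, as in our experiments) annihilates the three translation directions, this shows $\mathbf N_1$ remains in the right null space of the actual $\mathcal O$.

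The crux, and the step I expect to be the main obstacle, is showing that $\mathbf N_2$ leaves the null space. In the ideal proof, $\mathcal O\,\mathbf N_2=\mathbf 0$ holds because the analytical forms of $\bm\phi_{11},\bm\phi_{51},\bm\phi_{53}$ encode the exact relation linking $^G\mathbf p_{I,k}$, $^G\mathbf v_{I,1}$, $^G\mathbf p_{I,1}$ and $^G\mathbf g$ along the true trajectory, so that the $\lfloor\cdot\times\rfloor\,{^G\mathbf g}$ contributions coming through $\bm\Gamma_{i1}$, $-\bm\zeta_{a_i}\bm\phi_{53}$, $-\bm\zeta_{a_i}$ and $\bm\zeta_{a_i}$ telescope to zero. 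I would re-expand the same product with estimated quantities and show that the surviving term is a multiple of $^G\mathbf g$ governed by the mismatch between the linearization point used inside $\mathbf H_{u,k}$ at step $k$ and the state values carried by $\mathbf N_2$ at step $1$ --- exactly the discrepancy injected by the EKF update that FEJ would otherwise remove. The delicate point is to argue this residual does not accidentally cancel: I expect to show it is a \emph{generically nonzero} multiple of $^G\mathbf g$ (using that $\lfloor\mathbf u\times\rfloor\,{^G\mathbf g}$ does not vanish for the relevant offset $\mathbf u$ and that the estimated poses and anchor positions are in general position), which is the concrete manifestation of the ``gain of spurious information'' noted in \cite{Consistency_ana}.

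Finally I would close the dimension count. Theorem~\ref{thm:nullspace} establishes that the ideal null space is exactly the four-dimensional span of $\{\mathbf N_1,\mathbf N_2\}$; passing to estimated-state Jacobians perturbs $\mathcal O$ so as to \emph{add} information along $\mathbf N_2$ (previous step) while leaving the $\mathbf N_1$ directions exactly annihilated. Since this perturbation increases the rank rather than decreasing it, no new null direction is created generically, so every vector with a nonzero $\mathbf N_2$ component fails to be annihilated while the three $\mathbf N_1$ directions persist. The null space of the actual $\mathcal O$ therefore drops by exactly one dimension, from four to the three directions spanned by $\mathbf N_1$, i.e. $[\,\mathbf 0\ \mathbf 0\ \mathbf 0\ \mathbf 0\ \mathbf I\ \mathbf I\ \mathbf I\ \mathbf I\,]^\top$, as claimed.
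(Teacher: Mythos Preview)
Your proposal is correct and follows essentially the same route as the paper. The paper's own argument is in fact briefer than yours: it takes the structural persistence of $\mathbf N_1$ for granted, and for $\mathbf N_2$ it simply revisits the last line of the ideal computation of $\mathcal M_{k,1}$, replaces ${^{I_k}_G\mathbf R},{^G\mathbf p_{I_k}}$ by their estimates, and observes that the resulting vector inside the skew bracket is now ${^G\hat{\mathbf p}_r}-{^G\mathbf p_{a_1}}={^G\mathbf p_r}-{^G\tilde{\mathbf p}_r}-{^G\mathbf p_{a_1}}$, so the cancellation $\bm\zeta_{a_1}\lfloor-\tfrac{1}{2}\bm\zeta_{a_1}^\top\times\rfloor{^G\mathbf g}=0$ is spoiled by the error term ${^G\tilde{\mathbf p}_r}$; your explicit dimension-count closing step and your structural verification of $\mathbf N_1$ are additions the paper omits.
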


\begin{proof}
See \textbf{Appendix}.
\end{proof}

From \textit{theorem}.\ref{thm:nullspace}, we can not achieve drift-free state estimate with visual-inertial-ranging estimator, because the global translation and yaw rotation are unobservable. But we can improve the estimation accuracy by multi-sensor fusion algorithms. 
The \textit{theorem}.\ref{thm:nullspace_real} states that if we linearize the ranging measurement models at the latest estimated state, the VIRO system gains spurious information about the global yaw direction, which results in the inconsistency of the estimator.

\subsection{Consistent Filter Design} \label{subsec:consistent-issue}
To design consistent state estimator, we need to force the unobservable subspace of the estimated model (the actual case) to be the same as that of the ideal case, hence preventing spurious information gain and reducing inconsistency. The inconsistency issue also happens in the EKF-based VIO, and several approaches has been proposed to design consistent filter, such as the first-estimate Jacobian (FEJ) idea\cite{FEJ_Huang} and the OC methodology\cite{OC_Huang}. 

In this paper, we extend the traditional FEJ to the VIRO system by always evaluating the measurement Jacobian matrices at the first-estimated robot pose and UWB anchors' positions. And we follow \cite{FEJ_Huang} to perform FEJ for the IMU propagation and visual update. With these changes, the null space of the observability matrix $\mathcal O$ in the actual case keeps the same as that in the ideal case. 

\section{Experiments}
We implement the proposed consistent visual-inertial-ranging filter based on \textit{OpenVINS}\cite{OpenVINS}, which is the state-of-the-art filter based visual-inertial estimator. We incorporate UWB ranging measurements in it and also implemented the UWB initialization described in Sec.\ref{sec:UWB-init}. Note that the non-linear optimization problem \eqref{eq:uwb-nonlinear-init} is solved by \textit{Ceres-Solver}\cite{ceres-solver}, which is an open source C++ library for modeling and solving large, complicated optimization problems. We evaluate the accuracy and consistency with both simulation and real-world experiments. 

\subsection{Simulation Experiments}
The main purpose of simulation measurements is to validate the inconsistency issue described in \ref{subsec:consistent-issue} and further evaluate the consistency of the proposed VIRO. We expanded the simulation utility provided by \textit{Open-VINS} to additionally simulate ranging measurements from three UWB anchor nodes with noise density of \textit{0.15m} and bias of \textit{-0.75m}. In order to validate the inconsistency issue described in \ref{subsec:consistent-issue}, we tested the proposed VIRO on three different simulation datasets, and evaluate the normalized estimation error squared (NEES, see p31 of \cite{model_based_estimation_note}) of them. Some key parameters used in our simulation are listed in Tab. \ref{tab:sim_params}. 

\begin{table}[ht]
\centering
\caption{Simulation Parameters}\label{tab:sim_params}
\renewcommand\arraystretch{1.3}{
\begin{tabular}{cccc}
\toprule
\textbf{Parameter}  & \textbf{Value} & \textbf{Parameter} & \textbf{Value} \\
\hline
Cam Freq.(Hz)     & 10       & IMU Freq.(Hz) & 200 \\
UWB Freq.(Hz)     & 60       & Num. UWB Anchor & 3 \\
Max. Feats        & 180      & Num. Clones   & 11 \\
Gyro. White Noise & 1.7e-04  & Gyro. Rand. Walk & 2.0e-05 \\
Accel. White Noise & 2.0e-03 & Accel. Rand. Walk & 3.0e-03 \\
UWB White Noise   & 1.5e-1 & UWB ranging bias(m) & -0.75 \\
\toprule
\end{tabular} }
\end{table}

\begin{figure*}
\centering
\hspace{0.02\columnwidth}
\subfloat[eee\_01] {
  \includegraphics[width=0.6\columnwidth]{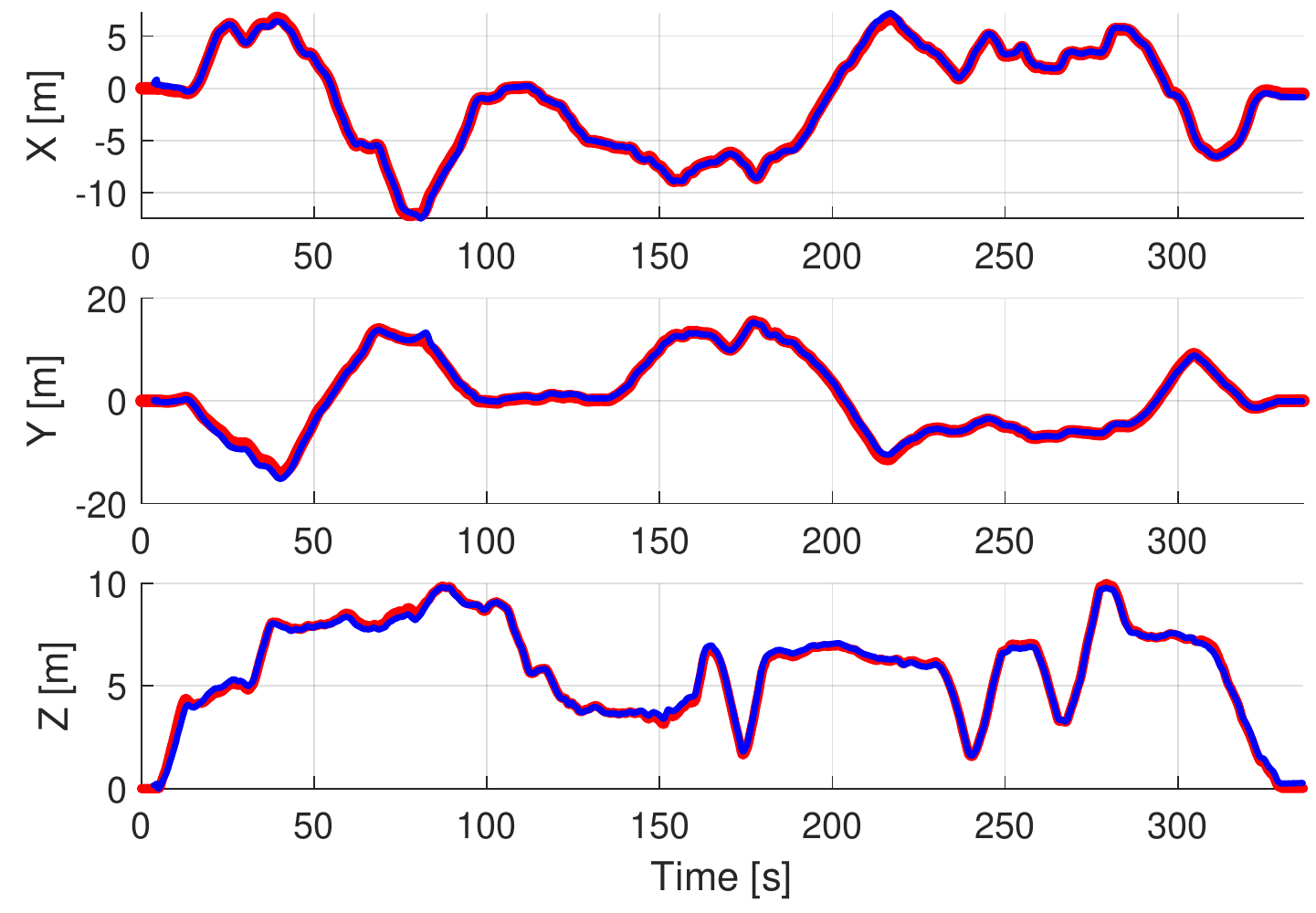}
} \vspace{3pt}
\subfloat[eee\_02] {
  \includegraphics[width=0.6\columnwidth]{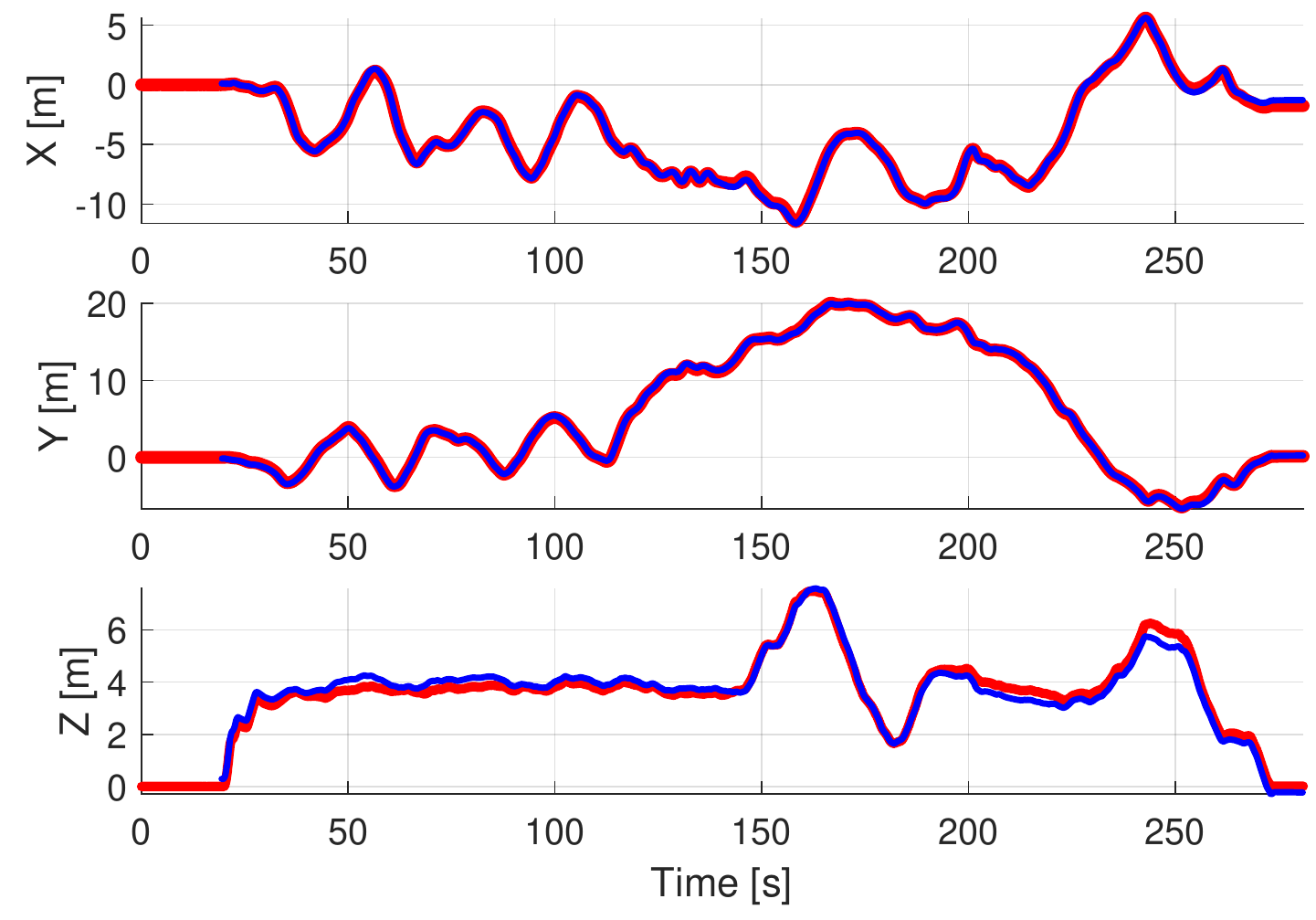}
} \vspace{3pt}
\subfloat[eee\_03] {
  \includegraphics[width=0.6\columnwidth]{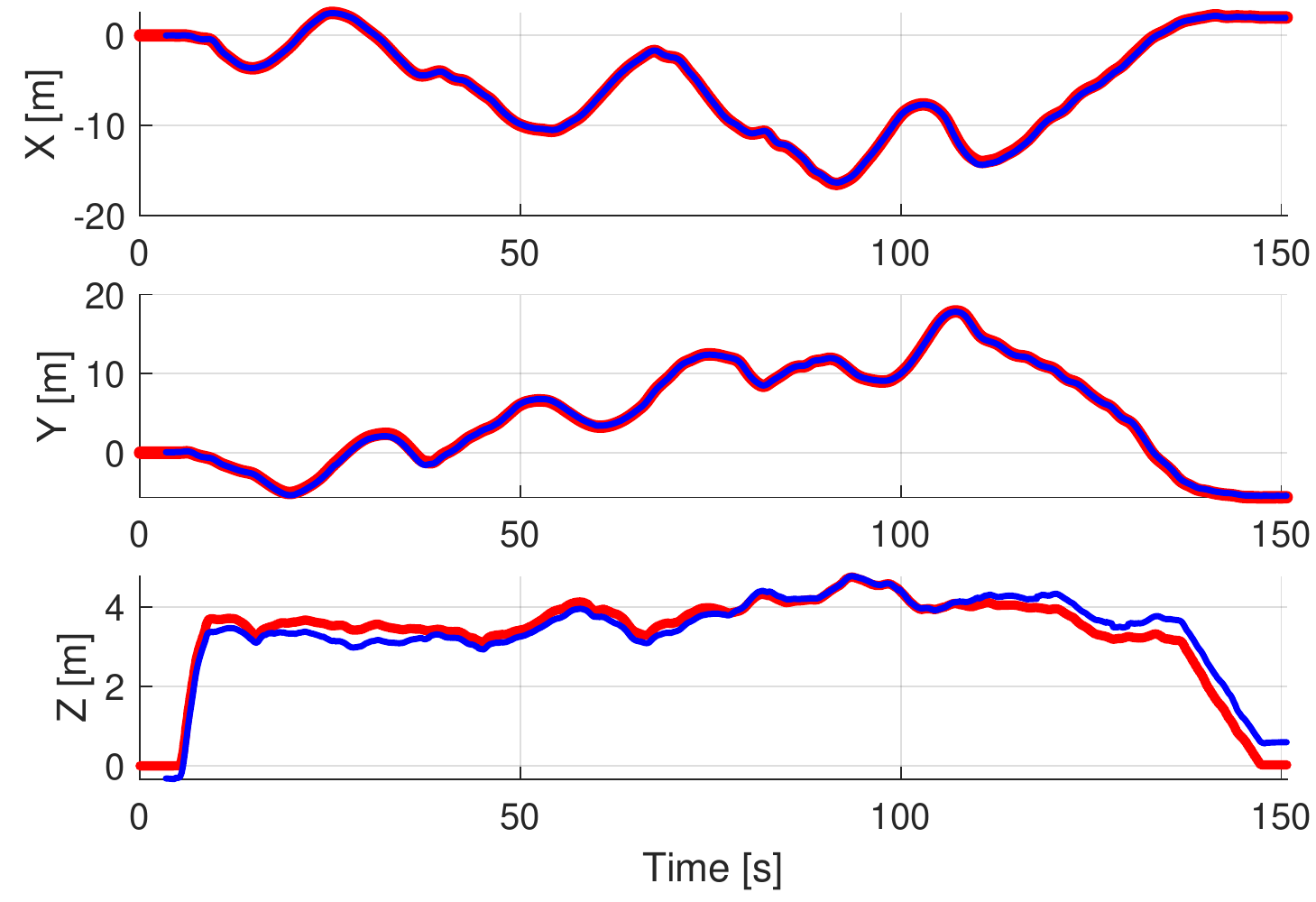}
} 
\hfill
\subfloat[nya\_01] {
  \includegraphics[width=0.6\columnwidth]{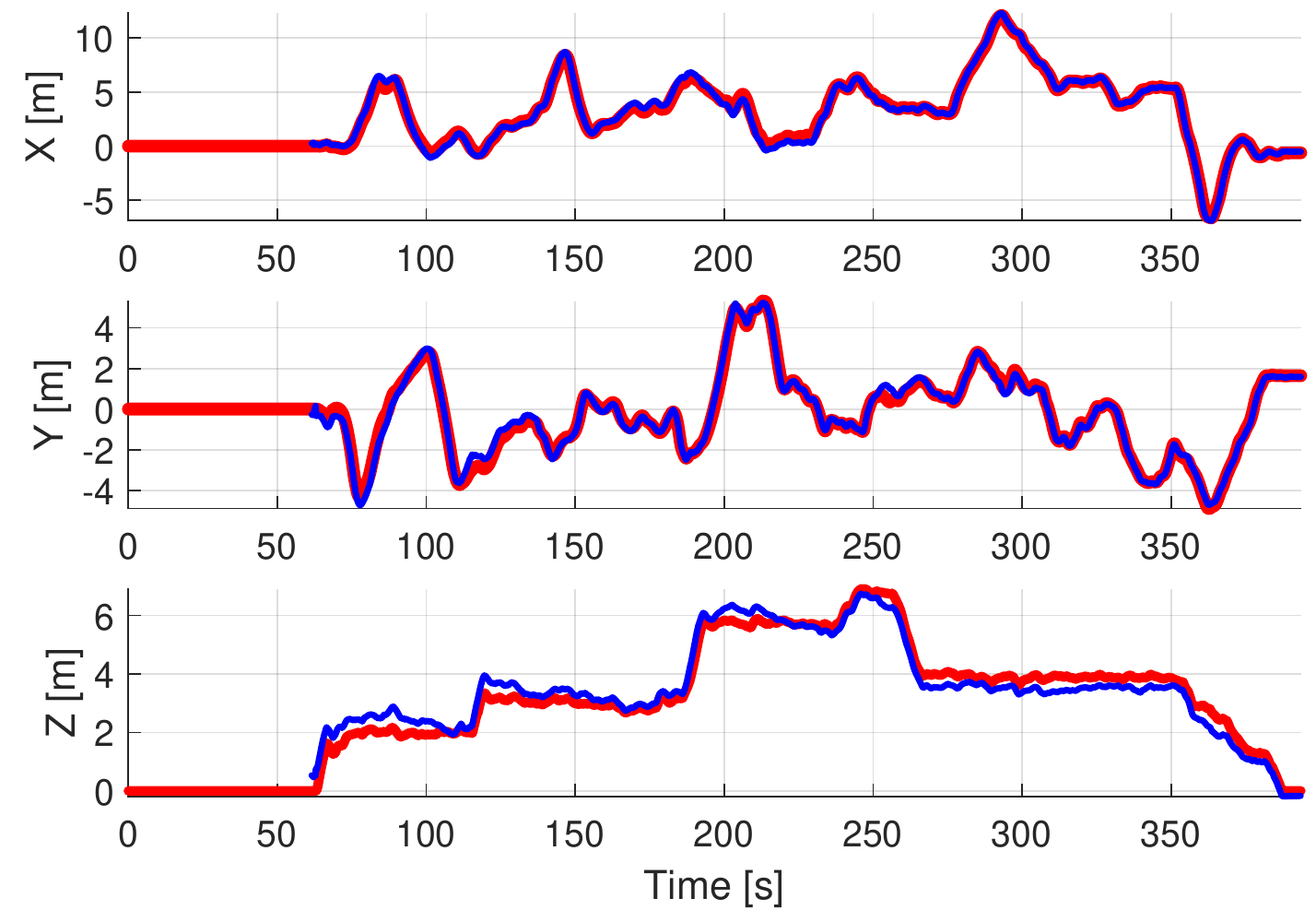}
} \vspace{3pt}
\subfloat[nya\_02] {
  \includegraphics[width=0.6\columnwidth]{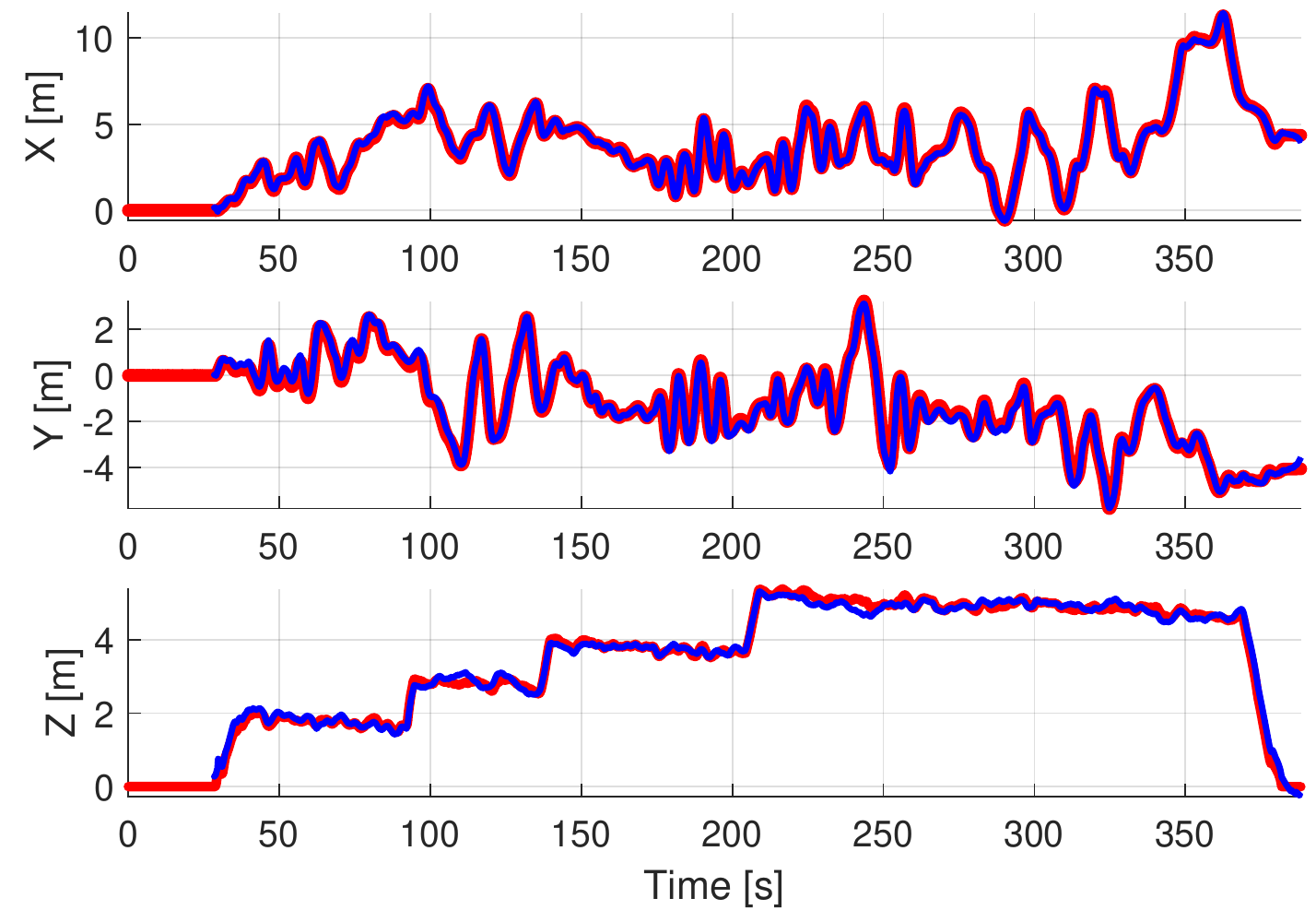}
} \vspace{3pt}
\subfloat[nya\_03] {
  \includegraphics[width=0.6\columnwidth]{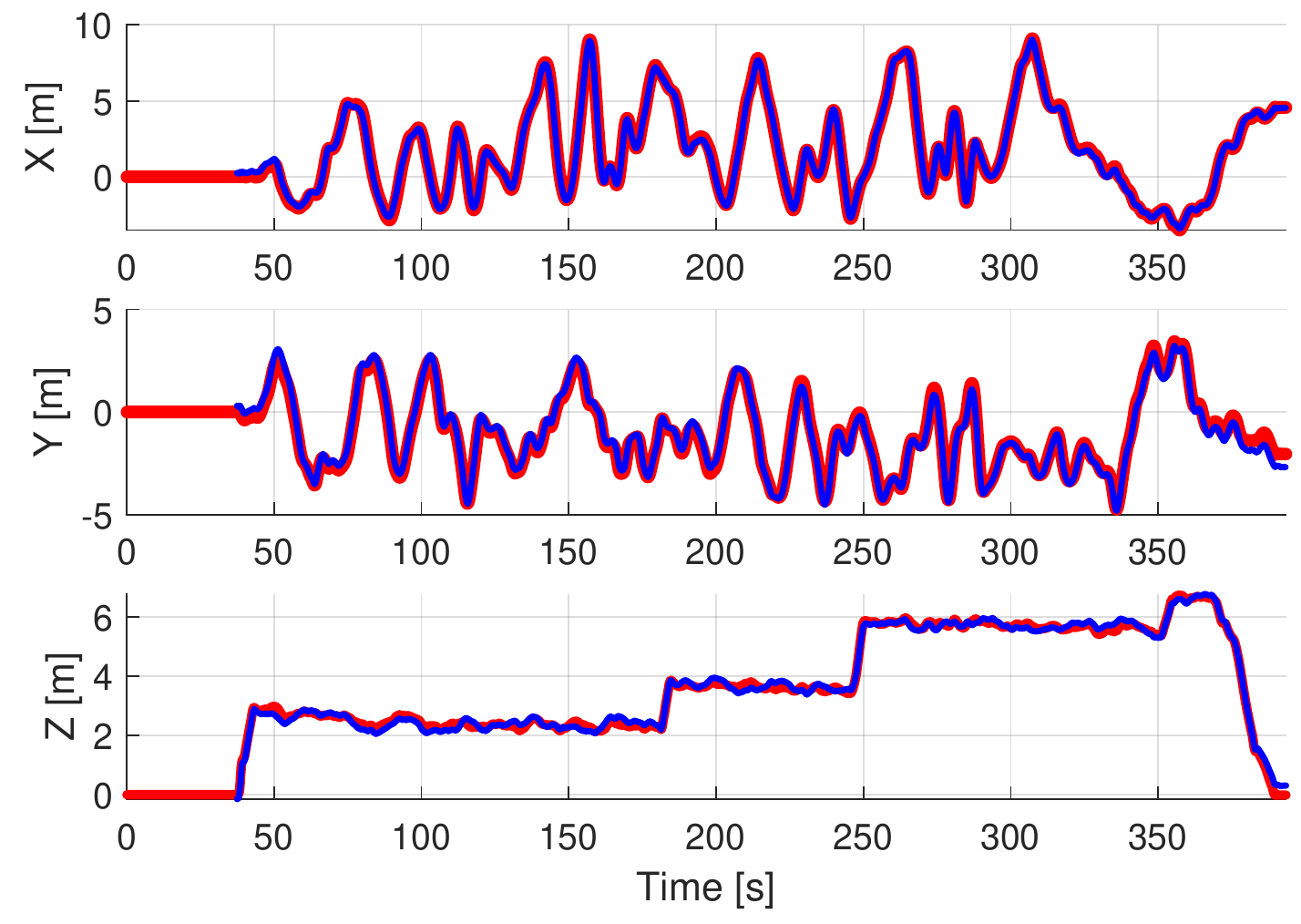}
} 
\captionsetup{font={footnotesize }}
\caption{Trajectories estimated by the proposed \textit{FEJ-VIRO} (blue) and the corresponding groundtruth (red) of \textit{six} runs available in the VIRAL dataset. }
\label{fig:real-world-ape}
\end{figure*}

\begin{figure}
  \centering 
  \subfloat[VIRO] {
    \includegraphics[width=0.9\columnwidth,trim=0 0 0 0,clip]{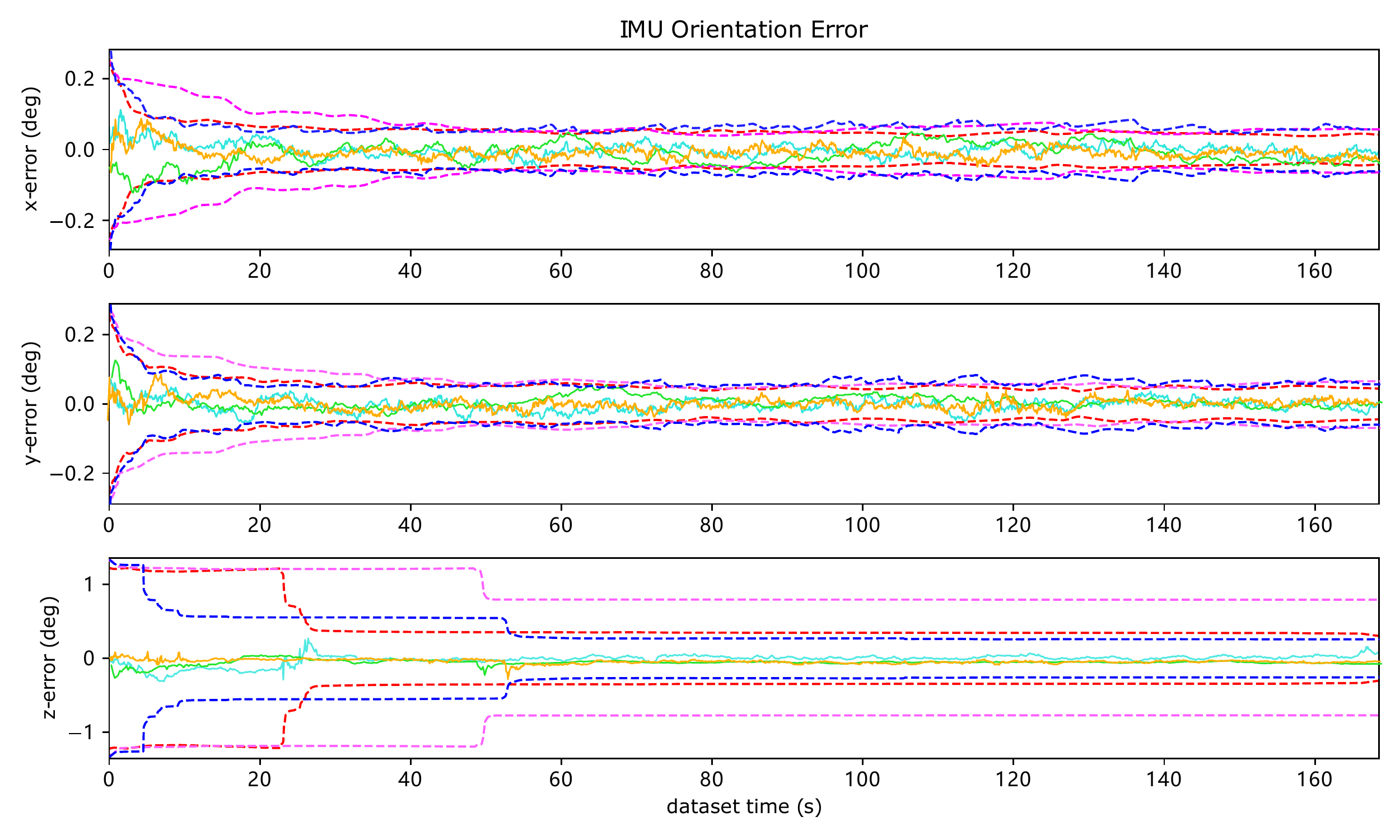}
  } 
  \newline
  \subfloat[FEJ-VIRO] {
    \includegraphics[width=0.9\columnwidth,trim=0 0 0 0,clip]{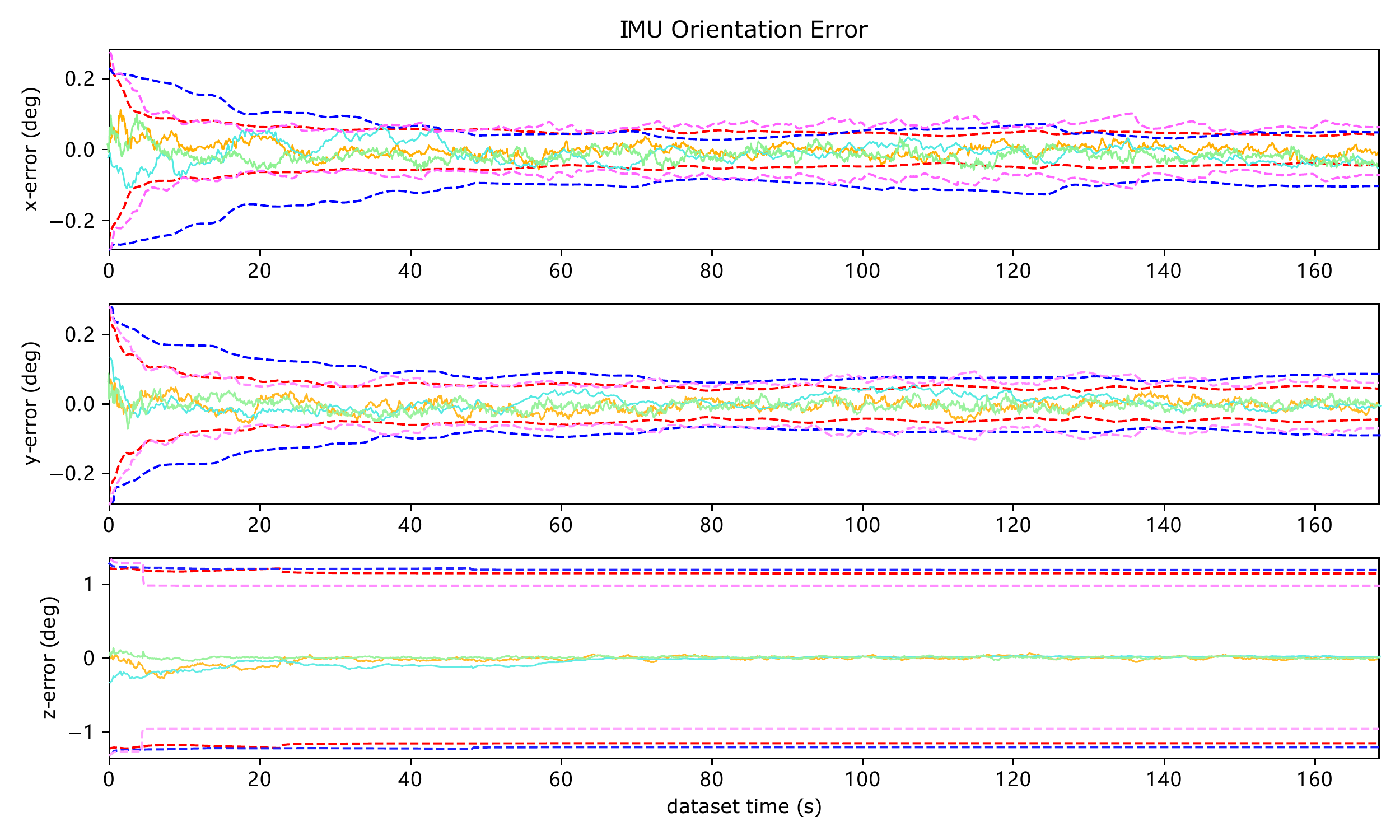}
  } 
  \captionsetup{font={footnotesize }}
\caption{Orientation error and $3\mbox{-}\sigma$ bounds of \textit{VIRO} and \textit{FEJ-VIRO} tested on three different simulation datasets. }
 \label{fig:3-sigma}
\end{figure}

We compare the NEES of \textit{OpenVINS}, \textit{VIRO} and \textit{FEJ-VIRO} to validate the inconsistency issue described in \ref{subsec:consistent-issue} and evaluate the consistency of the proposed \textit{FEJ-VIRO}. The NEES results are listed in Fig.\ref{fig:nees}
Note that we enabled FEJ for \textit{OpenVINS}, and we only disabled \textit{VIRO}'s FEJ when updating state with ranging measurements, so the inconsistency of \textit{VIRO} results only from state update with UWB ranging measurements. 

Fig.\ref{fig:nees} shows that the standard updates with UWB ranging measurements results in large NEES. The dotted line in Fig.\ref{fig:nees} represents the 3 degree of freedoms of the position and orientation. The OpenVINS exceeds the dotted line sometimes because the visual-inertial estimator degrades when the platform is moving with constraints (e.g. planar motion). By leveraging FEJ technique, the inconsistency issue is solved and results in lower NEES of \textit{FEJ-VIRO}. To further evaluate the proposed long-short window structure, we compare it with the method with short-only sliding window for UWB initialization (see \textit{FEJ-VIRO-S} in Fig.\ref{fig:nees}). In the \textit{FEJ-VIRO-S}, the long-window poses in \eqref{eq:msckf-long-window} are marginalized out from the state, so we can not estimate the covariance by performing \eqref{eq:long-window-residual} to \eqref{eq:long-window-Jacobian}. The \textit{FEJ-VIRO-S} initializes the $\mathbf P_{aa}$ in \eqref{eq:long-window-paa} with the covariance provided by \textit{Cere-Solver} and initialize the cross-covariance $\mathbf P_{xa}$ with \textit{zero}. Note that the estimator fails if we initialize the UWB anchors' \textit{positions} with only poses in the short-window.

We know from Sec.\ref{subsec:consistent-issue} that when we linearize the ranging measurement models at the current state estimates, the number of unobservable directions decreases from four to three, where the rotation about the gravity becomes (wrongly) observable. To further validate our theory, we plot the $3\mbox{-}\sigma$ bounds of \textit{VIRO} and \textit{FEJ-VIRO} (shown in Fig.\ref{fig:3-sigma}) tested on three simulation datasets. Fig.\ref{fig:3-sigma} shows that the $3\mbox{-}\sigma$ (uncertainty) of the z-axis rotation estimated by \textit{VIRO} reduces rapidly after the UWB anchors are initialized. However, we have proved in Sec.\ref{sec:observability-study} that the rotation about the gravity is unobservable for visual-inertial-ranging estimator, so the VIRO is over-confident about the estimated z-axis rotation. The over-confidence is the main cause of inconsistency.

Fig.\ref{fig:3-sigma} shows that the $3\mbox{-}\sigma$ bounds of z-axis rotation estimated by \textit{FEJ-VIRO} do not degrade. By linearize the model and ranging measurement models at the first estimated state, the number of unobservable directions of the estimated system is the same as that of the ideal system. The \textit{FEJ-VIRO} is not over-confident about the estimated z-axis rotation, which is the key to the consistency and low NEES shown in Fig.\ref{fig:nees}.


\begin{table}
\centering
\caption{Average ATE (m) of \textit{six} runs in VIRAL Dataset}\label{tab:ATE}
\renewcommand\arraystretch{1.3}{
\begin{tabular}{ccccc}
\toprule
        & OpenVINS    & VINS-Fusion      & VIRO & VIRO \\
        &             & (RT / LC)        &      & (FEJ) \\
\hline
eee\_01 & 0.647 & 0.564 / \textit{0.364} & 0.824 & \textbf{0.479} \\
eee\_02 & 0.604 & 0.582 / \textit{0.252} & 0.506 & \textbf{0.340} \\
eee\_03 & 0.417 & 0.540 / {0.390} & 0.350 & \textbf{0.348} \\
nya\_01 & 0.741 & 0.538 / \textit{0.235} & 0.552 & \textbf{0.505} \\
nya\_02 & 0.478 & 0.375 / {0.331} & 0.368 & \textbf{0.202} \\
nya\_03 & 0.692 & 0.748 / {0.370} & 0.567 & \textbf{0.290} \\
\toprule
\end{tabular} }
\end{table}

\subsection{Real-world Experiments}
We further evaluate our method based on a challenging real-world dataset, VIRAL\cite{nguyen2021ntuviral}, which provides measurements from extensive sensors, such as IMU, stereo cameras, and three UWB anchor nodes.

We test \textit{VIRO} and \textit{FEJ-VIRO} on six trajectories available in the VIRAL dataset. The only difference of \textit{VIRO} and \textit{FEJ-VIRO} is the linearization point of UWB ranging measurement models. We use 11 clones and 200 features for the real-time estimation. We incorporate ranging measurements from three UWB anchors and echo ranging measurements between UWB anchor nodes. For UWB initialization, we select key-frame poses every 0.3m to build the long window, and perform UWB initialization (see Sec.\ref{sec:UWB-init}) when there are over 50 poses in the long window. 

We compare \textit{VIRO} and \textit{FEJ-VIRO} with the original \textit{OpenVINS} to validate that our methods improve the accuracy and reduce localization drifts of VIO during long distance trajectories.
We also compare our methods with \textit{VINS-Fusion}\cite{qin2019a}\cite{qin2019b}, which is the state-of-the-art VI-SLAM estimator based on sliding-window smoothing. As discussed in Sec.\ref{sec:related_work}, the \textit{VINS-Fusion} leverage global information (loop-closure) and global non-linear optimization to achieve bounded error with significantly higher computational complexity. 
We use the configure files provided by \cite{viral_site} with limited modifications for experiments.

We record the trajectory of these methods and evaluate with the methods recommended by \cite{nguyen2021ntuviral}. The \textit{OpenVINS} configurations are available on the home page of VIRAL\cite{viral_site}. All experiment results are listed in Tab.\ref{tab:ATE}. \textit{VINS-Fusion} outputs the \textit{optimized path} after loop-closure and the \textit{real-time pose estimates}. Since poses of the \textit{optimized path} utilize \textit{future information} to improve accuracy and smoothness, which is non-causal and can not be used by motion controllers in practice, we compare our methods with the \textit{real-time pose estimates} and only list the accuracy of \textit{optimized path} for reference. We set to bold the best accuracy among \textit{OpenVINS}, \textit{VINS-Fusion (RT)}, \textit{VIRO} and the proposed \textit{FEJ-VIRO}. We also set the \textit{VINS-Fusion (LC)} to italic if it outperforms \textit{FEJ-VIRO} although it is listed only for reference.
The column of \textit{VINS-Fusion (RT)} represents the \textit{real-time pose estimates}, while the column of \textit{VINS-Fusion (LC)} represents the \textit{optimized path} after loop-closure.
We also plot the estimated trajectories by \textit{FEJ-VIRO} as well as the groundtruth in Fig.\ref{fig:real-world-ape}. From \ref{tab:ATE}, our \textit{FEJ-VIRO} is accurate and is nearly as accurate as the \textit{optimized path} of \textit{VINS-Fusion}. Note that our methods only requires an UWB anchor initialization at the beginning and an FEJ-EKF update during the running.
On the other hand, although the \textit{optimized path} of \textit{VINS-Fusion} is in high accuracy, the \textit{real-time pose estimates} suffer from sudden changes due to the loop-closure adjustment. While our method does not result in \textit{jump} of the \textit{real-time pose estimation}, thus is a better approach to on-board estimator that collaborate with motion controllers.

\section{CONCLUSIONS}

In this paper, we analyze the observability of visual-inertial-ranging estimator and propose a consistent filter incorporating measurements from cameras, IMU, and multiple ultra-widebands (UWB). In particular, we prove that \textit{four} unobservable directions exist in the VIRO system, which means that we can not design drift-free odometry by fusing measurements from UWB, cameras and IMU. We further study the structure of the estimated system to show that there are only three unobservable directions when we evaluate the Jacobians at the latest state estimates during every time step, which result in inconsistency. Based on these analyses, we leverage the FEJ technique to fuse UWB measurements consistently in a tightly-coupled MSCKF framework. Finally, we validate our analysis and the proposed system with both simulation and real-world experiments.

For future works, we would try other methods to address the inconsistency issue such as invariant filter or observability constraint (OC) technique to achieve higher accuracy. 

\section*{APPENDIX}

\textbf{Ideal Observability Properties:}
Following the previous work\cite{Consistency_ana}, it is easy to verify that $\mathbf N_o$ in \textit{theorem}.\ref{thm:nullspace} spans the null-space of $\mathbf H_{v,k} \bm \Phi_{k,1}$. And it is obvious that the $\mathbf N_1$ in \textit{theorem}.\ref{thm:nullspace} spans a subspace of the left-zero space of $\mathbf H_{u,k} \bm \Phi_{k,1}$, so we only verify that $\mathbf N_2$ is also a null-space of of it.
\begin{align}
\mathcal M_k &= \mathbf H_{u,k} \bm \Phi_{k,1} \mathbf N_2 = \nonumber \\[3pt]
&\scalemath{.8}{ \begin{bmatrix}
\bm \Gamma_{11} & \bm \Gamma_{12} & -\bm \zeta_{a_1} \bm \phi_{53} & -\bm \zeta_{a_1} \bm \phi_{54} & -\bm \zeta_{a_1} & \mathbf 0 & \bm \zeta_{a_1} & \mathbf 0 \\
\bm \Gamma_{21} & \bm \Gamma_{22} & -\bm \zeta_{a_1} \bm \phi_{53} & -\bm \zeta_{a_2} \bm \phi_{54} & -\bm \zeta_{a_2} & \mathbf 0 & \mathbf 0 & \bm \zeta_{a_1} \\
\mathbf 0 & \mathbf 0 & \mathbf 0 & \mathbf 0 & \mathbf 0 & \mathbf 0 & \bm \Lambda_a & -\bm \Lambda_a
\end{bmatrix} } \nonumber \\[3pt]
&\times \scalemath{.85}{
\begin{bmatrix}
    {^{I_1}_G \mathbf R} \cdot {^G \mathbf g} \\
    \mathbf 0_3 \\
    -\lfloor {^G \mathbf v_{I_1}} \times \rfloor \cdot {^G \mathbf g} \\
    \mathbf 0_3 \\
    -\lfloor {^G \mathbf p_{I_1}}\times \rfloor \cdot {^G \mathbf g} \\
    -\lfloor {^G \mathbf f}\times \rfloor \cdot {^G \mathbf g} \\
    -\lfloor {^G \mathbf p_{a_1}} \times \rfloor {^G \mathbf g} \\
    -\lfloor {^G \mathbf p_{a_2}} \times \rfloor {^G \mathbf g}
\end{bmatrix} } = \begin{bmatrix}
    \mathcal M_{k,1} \\
    \mathcal M_{k,2} \\
    \mathcal M_{k,3}
\end{bmatrix}
\end{align}
Next we will show that $\mathcal M_{k,1} = \mathcal M_{k,2} = \mathcal M_{k,3} = 0$. Subscribe the analytical form of \eqref{eq:UWB-obs-matrix} in the above equation, we have:
\begin{align} \label{eq:proof1}
    \mathcal M_{k,1} =&{} \bm \Gamma_{11} {^{I_1}_G \mathbf R} \cdot {^G \mathbf g} 
    + \bm \zeta_{a_1} \bm \phi_{53} \lfloor {^G \mathbf v_{I_1}} \times \rfloor \cdot {^G \mathbf g} \\[3pt]
    & + \bm \zeta_{a_1} \lfloor {^G \mathbf p_{I_1}}\times \rfloor \cdot {^G \mathbf g} + \bm \zeta_{a_1} \lfloor {^G \mathbf p_{a_1}} \times \rfloor {^G \mathbf g} \nonumber\\[3pt]
    =&{} \scalemath{.9} {
    \bm \zeta_{a_1} \cdot (
    ({^{I_k}_G \mathbf R^\top} \lfloor {^I \mathbf p_r} \times \rfloor \cdot \phi_{11} - \phi_{51}) {^{I_1}_G \mathbf R} {^G \mathbf g} +} \nonumber \\[3pt]
    & \scalemath{.9} {
    (k-1) \lfloor {^G \mathbf v_{I_1}} \times \rfloor {^G \mathbf g} + \lfloor {^G \mathbf p_{I_1}} \times \rfloor {^G \mathbf g} - \lfloor {^G \mathbf p_{a_1}} \times \rfloor {^G \mathbf g} 
    )} \nonumber \\[3pt]
    =& \scalemath{.9} {
    \bm \zeta_{a_1} \cdot ({^{I_k}_G \mathbf R^\top} \lfloor {^I \mathbf p_r} \times \rfloor {^{I_k}_G \mathbf R} {^G \mathbf g} - \nonumber} \\[3pt]
    &\scalemath{.9} {
    \lfloor {^G \mathbf p_{I_1}} + (k-1) {^G \mathbf v_{I_1}} - \frac{(k-1)^2}{2}{^G \mathbf g} - {^G \mathbf p_{I_k}} \times \rfloor {^G \mathbf g} + } \nonumber \\[3pt]
    &\scalemath{.9} {
    (k-1) \lfloor {^G \mathbf v_{I_1}} \times \rfloor + \lfloor {^G \mathbf p_{I_1}} \times \rfloor {^G \mathbf g} - \lfloor {^G \mathbf p_{a_1}} \times \rfloor {^G \mathbf g} )} \nonumber \\[3pt]
    =&\scalemath{.9} { \bm \zeta_{a_1} \cdot 
        (\lfloor {^{I_k}_G \mathbf R^\top} {^I \mathbf p_r} \times \rfloor {^G \mathbf g} + \lfloor {^G \mathbf p_{I_k}} \times \rfloor {^G \mathbf g} - \lfloor {^G \mathbf p_{a_1}} \times \rfloor {^G \mathbf g})
    } \nonumber \\[3pt]
    =& \scalemath{.9} { \bm \zeta_{a_1} \cdot 
        \lfloor {^{I_k}_G \mathbf R^\top} {^I \mathbf p_r} + {^G \mathbf p_{I_k}} - {^G \mathbf p_{a_1}} \times \rfloor  {^G \mathbf g}
    } \nonumber
\end{align}
Because $\bm \zeta_{a_1} = -2({^{I_k}_G \mathbf R^\top} {^I \mathbf p_r} + {^G \mathbf p_I} - {^G \mathbf p_a})^\top$, so we have $\mathcal M_{k,1} = \frac{1}{2} \lfloor \bm \zeta_{a_1}^\top \times \rfloor \bm \zeta_{a_1}^\top = 0$. Similarly, we have $\mathcal M_{k,2} = 0$.

The last step is to evaluate $\mathcal M_{k,3}$:
\begin{align}
    \mathcal M_{k,3} =& - \Lambda_a \cdot \lfloor {^G \mathbf p_{a_1}} \times \rfloor + \Lambda_a \cdot \lfloor {^G \mathbf p_{a_2}} \times \rfloor \\[3pt]
    =& -2({^G \mathbf p_{a_1}} - {^G \mathbf p_{a_2}})^\top (\lfloor {^G \mathbf p_{a_1}} \times \rfloor -  \lfloor {^G \mathbf p_{a_1}} \times \rfloor) \nonumber \\[3pt]
    =& 0 \nonumber
\end{align}
As a result, $\mathbf N_o$ spans the null-space of the visual-inertial-ranging state estimator.

\textbf{Actual Observability Properties:}
Under the actual case, the UWB measurement models are linearized at the ever-changing estimated values, which makes the last step of \eqref{eq:proof1} become:
\begin{align} \label{eq:actual-proof}
    \hat{\mathcal M}_{k,1} &= \scalemath{.9} { \bm \zeta_{a_1} \cdot 
        (\lfloor {^{I_k}_G \hat{\mathbf R}^\top} {^I \mathbf p_r} \times \rfloor {^G \mathbf g} + \lfloor {^G \hat{\mathbf p}_{I_k}} \times \rfloor {^G \mathbf g} - \lfloor {^G \mathbf p_{a_1}} \times \rfloor {^G \mathbf g})
    } 
\end{align}
Note that in \eqref{eq:actual-proof}, ${^{I_k}_G \hat{\mathbf R}^\top} {^I \mathbf p_r} + {^G \hat{\mathbf p}_{I_k}} = {{^G \hat{\mathbf p}_r}} = {^G \mathbf p_r} - {^G \tilde{\mathbf p}_r}$, an error term appears to destroy the observability, which results in $\hat{\mathcal M}_{k,1} \neq 0$. So the unobservability direction spanned by $\mathbf N_2$ does not hold in the actual case.


\bibliographystyle{IEEEtran}
\bibliography{bibfiles/bibs.bib}

\end{document}